\documentclass[onecolumn, print]{ieeecolor}
\usepackage{generic}
\usepackage{cite}
\usepackage{amsmath,amssymb,amsfonts}
\usepackage{algorithmic}
\usepackage{graphicx}
\usepackage{algorithm,algorithmic}
\usepackage{hyperref}
\hypersetup{hidelinks=true}
\usepackage{textcomp}
\let\labelindent\relax
\usepackage{enumitem}
\usepackage{pgfplots}
\pgfplotsset{compat=1.18}
\usepgfplotslibrary{fillbetween}

\usepackage{pgfplots}
  \pgfplotsset{compat=newest}
  \usetikzlibrary{plotmarks}
  \usetikzlibrary{arrows.meta}
  \usepgfplotslibrary{patchplots}
  \usepackage{grffile}

\usepackage{microtype}
\usepackage{subcaption}
\usepackage{tikz}
\usetikzlibrary{arrows.meta,positioning,calc}
\usepackage{booktabs}

\usepackage{silence}
\usepackage{amsthm}
\usepackage{mathtools}
\theoremstyle{definition}

\theoremstyle{plain}
\newtheorem{assumption}{Assumption}
\newtheorem{lemma}{Lemma}
\newtheorem{theorem}{Theorem}

\newcommand{\xstar}{x^*}

\newcommand{\RR}{\mathbb{R}}

\newcommand{\afrak}{\mathfrak{a}}
\newcommand{\efrak}{\mathfrak{e}}
\newcommand{\FF}{\mathcal{F}}
\newcommand{\EE}{\mathbb{E}}
\newcommand{\betatilde}{\tilde{\beta}}
\newcommand{\etatilde}{\tilde{\eta}}
\newcommand{\gammatilde}{\tilde{\gamma}}
\newcommand{\boldx}{\mathbf{x}}

\def\BibTeX{{\rm B\kern-.05em{\sc i\kern-.025em b}\kern-.08em
    T\kern-.1667em\lower.7ex\hbox{E}\kern-.125emX}}

\begin{document}
\title{Heavy-Tailed and Long-Range Dependent Noise in Stochastic Approximation: A Finite-Time Analysis}
\author{Siddharth Chandak, Anuj Kumar Yadav, Ayfer Özgür, and Nicholas Bambos
\thanks{\hrule\medskip Siddharth Chandak, Ayfer Özgür, and Nicholas Bambos are with the Department of Electrical Engineering, Stanford University, CA, USA. {Anuj Kumar Yadav is with the School of Computer \& Communication Sciences, EPFL, Lausanne, Switzerland.} Emails: chandaks@stanford.edu, anuj.yadav@epfl.ch, aozgur@stanford.edu, bambos@stanford.edu.}
}

\maketitle

\begin{abstract}
Stochastic approximation (SA) is a fundamental iterative framework with broad applications in reinforcement learning and optimization. Classical analyses typically rely on martingale difference or Markov noise with bounded second moments, but many practical settings, including finance and communications, frequently encounter heavy-tailed and long-range dependent (LRD) noise. In this work, we study SA for finding the root of a strongly monotone operator under these non-classical noise models. We establish the first finite-time moment bounds in both settings, providing explicit convergence rates that quantify the impact of heavy tails and temporal dependence. Our analysis employs a noise-averaging argument that regularizes the impact of noise without modifying the iteration. Finally, we apply our general framework to stochastic gradient descent (SGD) and gradient play, and corroborate our finite-time analysis through numerical experiments.
\end{abstract}

\begin{IEEEkeywords}
\noindent Stochastic approximation, heavy-tailed noise, long-range dependencies, optimization, finance, queueing, finite-time analysis
\end{IEEEkeywords}

\section{Introduction}
\label{sec:introduction}
Stochastic Approximation (SA) is a class of iterative schemes to find the zeroes of an operator given its noisy realizations~\cite{Robbins-Monro}. A wide range of practical algorithms across different fields can be modeled within the SA framework. These include optimization algorithms such as stochastic gradient descent (SGD) and mirror descent~\cite{bubeck2015convex}, reinforcement learning algorithms such as Q-learning, policy gradient, and TD(0)~\cite{Sutton}, as well as algorithms arising in communications and stochastic control~\cite{Borkar-book}.

Noise plays a central role in stochastic approximation, both in enabling many algorithms to be modeled within this framework and in shaping their behavior. The classical SA formulation assumes martingale difference noise with bounded moments~\cite{Robbins-Monro,Kushner}, which is well suited to statistical estimation and stochastic optimization problems arising from independent sampling or unbiased gradient estimates. More recently, reinforcement learning applications have motivated the study of Markovian noise models, as the learning dynamics are governed by Markov decision processes~\cite{Borkar-Markov, Borkar-book}. Reflecting this emphasis, much of the existing SA theory has been developed under these classical noise models. However, in many real-world applications, these noise models can be overly restrictive.

A first limitation is the assumption of bounded second moments, which excludes noise with heavy tails and occasional large-magnitude fluctuations. To address this, it is natural to consider \textbf{heavy-tailed noise} models, in which the noise only admits finite $p$-th moments for some $p<2$~\cite{heavy-tail-book}. Such models capture settings where rare but extreme events play a significant role, as encountered in queueing systems~\cite{qq}, and finance~\cite{finmarkets}.

A second limitation is the assumption of weak or no temporal dependence, which fails to capture the persistent correlations observed in many time series. This motivates the study of SA under \textbf{long-range dependent (LRD) noise} processes, where correlations decay slowly over time and can substantially affect the dynamics of the iterates~\cite{LRD-book}. Such behavior arises naturally in applications including network traffic~\cite{lrd-ex-1}, climate systems~\cite{lrd-ex-2}, and economic time series~\cite{lrd-ex-3}. 

Motivated by these applications and by the limitations of existing theory, we study stochastic approximation under heavy-tailed and long-range dependent noise models. 

\subsection{Our Contributions}
We establish the first finite-time convergence bounds for SA under both heavy-tailed and LRD noise models. Specifically, we study the problem of finding the root of a strongly monotone operator from noisy observations, and derive bounds on moments of the error $\|x_k - \xstar\|$, where $\xstar$ is the unique zero of the operator. Under martingale difference and Markovian noise, the mean square error at iteration $k$ decays as $\mathcal{O}(k^{-1})$~\cite{Zaiwei}. In contrast, under the noise models considered in this work, the convergence rates degrade as follows:
\begin{itemize}
    \item \textbf{Heavy-tailed noise:} When the noise admits only a finite $p^{\text{th}}$ moment for some $p \in (1,2)$, we establish bounds on the $p^{\text{th}}$-moment of the error, showing a decay rate of $\mathcal{O}(k^{-(p-1)})$. This setting includes, for example, centered Pareto distributions with tail index $\alpha\in(1,2)$ and symmetric $\alpha$-stable noise, with $p=\alpha$ in both cases.
    \item \textbf{LRD noise:} When the autocovariance of the noise process decays as $\mathcal{O}(h^{-\delta})$ for $\delta\in(0,1)$, we obtain a mean square error bound of order $\mathcal{O}(k^{-\delta})$. This setting covers, for instance, fractional Gaussian noise (fGn) with Hurst index $H\in(1/2,1)$ where $\delta=2-2H$ and FARIMA$(0,c,0)$ processes where $\delta = 1-2c$.
\end{itemize}

These results admit a natural interpretation: heavier-tailed noise (smaller $p$) induces larger fluctuations, while stronger temporal dependence (smaller $\delta$) slows averaging, both leading to slower convergence.

We apply our general framework to two important classes of algorithms: stochastic gradient descent (SGD) for strongly convex optimization and gradient play in strongly monotone games. As a consequence of our results in Theorems~\ref{thm:heavy} and~\ref{thm:LRD}, we obtain the first finite-time guarantees for SGD under LRD noise, and for gradient play under both heavy-tailed and LRD noise. We also provide numerical experiments illustrating the impact of heavy tails and temporal dependence on convergence.

Our proof technique relies on introducing averaged noise and auxiliary iterates, transforming the iteration so that randomness appears only through an averaged term. This averaging regularizes the noise, yielding improved moment bounds even under heavy-tailed or long-range dependent (LRD) perturbations. We emphasize that analyzing this averaged noise sequence is just a proof technique and not a modification to the iteration.

\subsection{Related Work}
Classical analyses of SA primarily focused on asymptotic convergence under martingale difference or Markovian noise (see~\cite{Borkar-book, Kushner} for textbook references). Motivated by applications in optimization and reinforcement learning, recent work has focused on finite-time guarantees. However, these non-asymptotic results are largely developed under the same classical noise assumptions (e.g., see~\cite{Zaiwei, srikant2019finite} and references therein).

To the best of our knowledge, the only existing work on general SA under heavy-tailed or LRD noise focuses on asymptotic convergence guarantees~\cite{anantharam2012stochastic}. In particular, they show almost sure convergence for LRD noise, which we complement by providing mean square error bounds, and convergence in the $p^{\text{th}}$-moment for heavy-tailed noise, which we extend by establishing explicit finite-time bounds.

Despite this limited literature for general stochastic approximation, stochastic gradient descent (SGD) under heavy-tailed noise has been widely studied. There is a substantial body of work that analyzes \textit{vanilla} SGD under heavy-tailed gradient noise~\cite{vanilla-sgd-1, vanilla-sgd-2, vanilla-sgd-3, vanilla-sgd-4, vanilla-sgd-5}. While our framework recovers the same rates in the strongly convex setting, these works exploit properties specific to gradients and therefore do not extend to the general strongly monotone operators considered in this work. Other works propose modifications to the algorithm such as norm-based clipping~\cite{clipping-1, clipping-2}, and gradient normalization~\cite{normalization}.

\subsection{Outline and Notation}
This paper is structured as follows. Section~\ref{sec:formulation} introduces the general SA framework. Section~\ref{sec:results} presents the heavy-tailed and long-range dependent noise models and develops the corresponding bounds. Section~\ref{sec:outline} provides a proof sketch of the main results. Section~\ref{sec:applications} presents SGD and gradient play as applications, together with numerical simulations. Section~\ref{sec:conc} concludes with a discussion of future directions.

Throughout this work, $\|\cdot\|$ denotes the Euclidean norm, and $\langle x_1, x_2 \rangle$ denotes the inner product $x_1^\top x_2$. We use the notation $f(k)=\mathcal{O}(g(k))$ to denote that there exists a constant $C>0$ such that $|f(k)|\le Cg(k)$ for all $k\geq 0$.

\section{Problem Formulation}\label{sec:formulation}
In this section, we formulate the stochastic approximation (SA) problem, and present the assumptions that are common across different noise models considered in this work. Consider the following iteration,
\begin{equation}\label{eqn:iter}
    x_{k+1}=x_k-\beta_k(F(x_k)+\eta_k).
\end{equation}
Here, $x_k\in\RR^d$ is the iterate and $\beta_k$ is the stepsize at time $k$. The function $F:\RR^d\mapsto\RR^d$ denotes the mapping we wish to find the zero for, and $\eta_k$ denotes the noise sequence. We consider stepsize sequence of the following form:
$$\beta_k=\frac{\beta}{k+K_0},$$
where $\beta,K_0>0$. This decaying stepsize sequence is standard in analysis of stochastic approximation, and allows for $\mathcal{O}(1/k)$ mean square error bound under the light-tailed martingale difference noise model~\cite{Zaiwei}.

Our first assumption imposes strong monotonicity and Lipschitz continuity on the operator $F(\cdot)$.
\begin{assumption}\label{assu:monotone}
    The operator $F:\RR^d\mapsto\RR^d$ is $\mu$-strongly monotone, i.e., there exists a $\mu>0$ such that
    $$\langle F(x_1)-F(x_2),x_1-x_2\rangle\geq \mu\|x_1-x_2\|^2,$$
    for all $x_1,x_2\in\RR^d$. Moreover, $F(\cdot)$ is $L$-Lipschitz, i.e., 
    $$\|F(x_1)-F(x_2)\|\leq L\|x_1-x_2\|,$$
    for all $x_1,x_2\in\RR^d$ where $L>0$.
\end{assumption}
This assumption is common in finite-time analyses of stochastic approximation, as it yields strong convergence guarantees while encompassing a broad class of problems. Examples include gradient operators for strongly convex objectives, \textit{pseudogradient} operators in strongly monotone games, and linear operators induced by Hurwitz matrices. In these cases, \eqref{eqn:iter} specializes to stochastic gradient descent (SGD), gradient play, and linear SA, respectively. Such operators have a unique zero~\cite[Theorem 2.3.3 (b)]{VI}, which we denote by $\xstar$, i.e., there exists a unique $\xstar\in\RR^d$ such that $F(\xstar)=0$. Our goal here is to study the convergence rate of $x_k$ to $\xstar$. 


\section{Noise Models and Results}\label{sec:results}
We now present the different noise models considered in this paper, along with the corresponding finite-time guarantees. 

\subsection{Martingale Difference with Bounded Second Moment}
Although this noise model is not the main focus of this paper, we include the standard martingale difference noise with bounded second moments for completeness. This provides an useful benchmark to contrast the resulting guarantees and to motivate the need for different proof techniques in the presence of heavy-tailed or long-range dependent (LRD) noise. Suppose that $\eta_k$ satisfies the following assumption.
\begin{assumption}[\textbf{Martingale difference with bounded second moment}]\label{assu:standard}
    Let $\FF_k$ denote a sigma-field defined as $\FF_k:=\sigma(x_0,\eta_0,\eta_1,\dots,\eta_{k-1})$. Then, $\{\eta_k\}_{k \geq 0}$ is a martingale difference sequence adapted to the filtration $\{\FF_k\}_{k\geq0}$, i.e., $\EE[\eta_k\mid\FF_k]=0$. Moreover, for all $k\geq 0$, $\EE[\|\eta_k\|^2\mid\FF_k]\leq \sigma^2$. 
\end{assumption}
This assumption often arises naturally when we observe noisy observations of the operator $F(\cdot)$, e.g., $F(x_k,\xi_k)$ at time $k$ such that $\EE[F(x_k,\xi_k)\mid\FF_k]=F(x_k)$, where $\xi_k$ is some random variable that is not $\FF_k$-measurable. Then, defining $\eta_k=F(x_k,\xi_k)-F(x_k)$ satisfies the martingale difference assumption. Moreover, if $F(\cdot)$ is bounded, then the $\eta_k$ is also bounded and therefore has finite variance. Sub-Gaussian and sub-exponential distributions are two common examples of distribution families with bounded second moments. 

We now present the following result which shows that under martingale difference noise with bounded second moment, the mean square error bound is $\mathcal{O}(1/k)$. 
\begin{theorem}\label{thm:standard}
    Suppose Assumptions~\ref{assu:monotone} and~\ref{assu:standard} are satisfied. Then there exist constants $C_1, C_2$, and $C_3$ such that if $\beta>C_1$, $K_0\geq C_2$, then for all $k\geq 0$,
    $$\EE\left[\|x_k-\xstar\|^2\right]\leq \frac{C_3}{k+K_0}.$$
\end{theorem}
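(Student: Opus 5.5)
We will prove Theorem~\ref{thm:standard} with the standard one-step Lyapunov drift argument, using $V(x)=\|x-\xstar\|^2$, and then close a scalar recursion by induction.

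\textbf{One-step recursion.} Write $e_k=x_k-\xstar$ and use $F(\xstar)=0$. Expanding \eqref{eqn:iter} gives
\begin{equation*}
\|e_{k+1}\|^2=\|e_k\|^2-2\beta_k\langle F(x_k)-F(\xstar)+\eta_k,\,e_k\rangle+\beta_k^2\|F(x_k)-F(\xstar)+\eta_k\|^2 .
\end{equation*}
We bound the terms as follows.
\begin{itemize}
\item By strong monotonicity (Assumption~\ref{assu:monotone}), the drift term satisfies $-2\beta_k\langle F(x_k)-F(\xstar),e_k\rangle\le -2\mu\beta_k\|e_k\|^2$.
\item Since $e_k$ is $\FF_k$-measurable and $\EE[\eta_k\mid\FF_k]=0$, the cross term $\langle\eta_k,e_k\rangle$ has zero conditional mean.
\item For the quadratic term, Lipschitz continuity and $\|a+b\|^2\le 2\|a\|^2+2\|b\|^2$ give $\|F(x_k)-F(\xstar)+\eta_k\|^2\le 2L^2\|e_k\|^2+2\|\eta_k\|^2$. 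Its conditional expectation is therefore at most $2L^2\|e_k\|^2+2\sigma^2$.
\end{itemize}
Taking total expectations and writing $a_k=\EE\|e_k\|^2$, we obtain
\begin{equation*}
a_{k+1}\le (1-2\mu\beta_k+2L^2\beta_k^2)\,a_k+2\sigma^2\beta_k^2 .
\end{equation*}

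\textbf{Choosing $K_0$ and $\beta$.} Take $K_0\ge C_2$ large enough that $2L^2\beta_k\le\mu$ for every $k$; this requires $K_0\ge 2L^2\beta/\mu$, and also $K_0>\mu\beta$ so that the contraction factor stays in $[0,1)$. Then
\begin{equation*}
a_{k+1}\le\Bigl(1-\frac{\mu\beta}{k+K_0}\Bigr)a_k+\frac{2\sigma^2\beta^2}{(k+K_0)^2}.
\end{equation*}
Take $\beta>C_1:=2/\mu$, so that $\mu\beta>2$.

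\textbf{Induction.} We show $a_k\le C_3/(k+K_0)$ by induction on $k$. The base case $k=0$ holds once $C_3\ge K_0\|x_0-\xstar\|^2$; if $x_0$ is random, we use $\EE\|x_0-\xstar\|^2$ instead. For the inductive step we need
\begin{equation*}
\Bigl(1-\frac{\mu\beta}{k+K_0}\Bigr)\frac{C_3}{k+K_0}+\frac{2\sigma^2\beta^2}{(k+K_0)^2}\le\frac{C_3}{k+K_0+1}.
\end{equation*}
Using $\frac{1}{k+K_0+1}\ge\frac{1}{k+K_0}-\frac{1}{(k+K_0)^2}$, it suffices that
\begin{equation*}
-\mu\beta C_3+2\sigma^2\beta^2\le -C_3,
\qquad\text{that is,}\qquad
C_3(\mu\beta-1)\ge 2\sigma^2\beta^2 .
\end{equation*}
Since $\mu\beta>1$, this holds for $C_3=\max\bigl\{K_0\,\EE\|x_0-\xstar\|^2,\;2\sigma^2\beta^2/(\mu\beta-1)\bigr\}$.

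\textbf{Main obstacle.} The only delicate point is the bookkeeping of constants. The requirement on $K_0$ depends on $\beta$, and both constraints on $K_0$ must hold simultaneously so that the factor $(1-\mu\beta_k)$ is nonnegative. No step is conceptually hard: the martingale property removes the cross term exactly. This exactness is what fails under heavy-tailed or LRD noise, where the second moment is unavailable or the cross term does not vanish.
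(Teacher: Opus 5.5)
Your proof is correct and follows the paper's route: a one-step drift bound on $\|x_k-\xstar\|^2$ in which the martingale cross term vanishes, which gives $a_{k+1}\le(1-\mu\beta_k)a_k+\mathcal{O}(\beta_k^2\sigma^2)$ once $K_0$ is large enough. The differences are minor. You bound $\|F(x_k)-F(\xstar)+\eta_k\|^2$ crudely, which costs a factor $2$ in $C_2$ and in the noise term, whereas the paper isolates $\beta_k^2\|\eta_k\|^2$ exactly. You also close the recursion by direct induction on $C_3/(k+K_0)$ instead of unrolling it with Lemmas~\ref{lemma:aux1} and~\ref{lemma:aux2}, and your induction needs only $\mu\beta>1$ rather than the paper's $\mu\beta\ge 2$.
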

Explicit values for $C_1,C_2,$ and $C_3$ are provided along with the theorem's proof in Appendix~\ref{app:thm_proof_standard}. 
\subsection{Heavy-tailed Noise}
We now turn to the first major noise model considered in this work: heavy-tailed noise. Our formal assumption focuses on noise sequences with unbounded second moments, which capture the presence of rare but large-magnitude fluctuations commonly observed in real-world systems, such as financial markets~\cite{finmarkets} and queueing systems~\cite{qq}.

\begin{assumption}[\textbf{Heavy-tailed noise}]\label{assu:heavy}
   The noise sequence $\{\eta_k\}_{k \geq 0}$, where $\eta_k \in\RR^d$, is an independent, zero-mean sequence with unbounded second moment but bounded $p^{\text{th}}$ moment for some $1<p<2$. That is, there exists $\sigma>0$ such that 
$$\EE[\eta_k]=0,\;\;\EE[\|\eta_k\|^2]=\infty,\;\;\text{and}\;\;\EE[\|\eta_k\|^p]\leq \sigma^p,$$
  for all $k\geq 0$.
\end{assumption}
We note that there exist heavy-tailed distributions with finite second moment. However, such cases can be handled under Assumption~\ref{assu:standard}. In line with the optimization literature~\cite{vanilla-sgd-3, clipping-1}, we adopt unbounded second moment as the defining characteristic of heavy-tailed noise in this paper. 

Heavy-tailed distributions with finite $p^\text{th}$ moments but infinite second moments arise naturally in many applications, and standard examples include Pareto laws, $\alpha$-stable laws, and related power-law–type models. For instance, a centered Pareto distribution with tail index $\alpha\in(1,2)$ has a tail which decays polynomially~\cite{Foss-Pareto}. Similarly, $\alpha$-stable distributions with stability index $\alpha\in(1,2)$ generalize Gaussian noise~\cite{Samo-alpha-stable}, with the Gaussian case recovered when $\alpha=2$. Both of these distributions have finite $p^\text{th}$ moments for all $1< p<\alpha<2$. Further examples discussed in the heavy-tail literature include certain lognormal, Weibull with shape less than $1$, and Student-$t$ distributions, where the tail index again directly controls which moments exist~\cite{Foss-Pareto}.

We now present our first main result, which provides a bound on the $p^\text{th}$ moment of the error.
\begin{theorem}\label{thm:heavy}
    Suppose Assumptions~\ref{assu:monotone} and~\ref{assu:heavy} are satisfied. Then there exist constants $C_4, C_5$, and $C_6$ such that if $\beta>C_4$, $K_0\geq C_5$, then for all $k\geq 0$ we have,
    $$\EE\left[\|x_k-\xstar\|^p\right]\leq \frac{C_6}{(k+K_1)^{p-1}}.$$
\end{theorem}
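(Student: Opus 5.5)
We never run a Lyapunov recursion directly on $\|x_k-\xstar\|^p$ with the raw noise. Because $\EE[\|\eta_k\|^2]=\infty$, the usual expansion of $\|x_{k+1}-\xstar\|^2$ produces the term $\beta_k^2\|\eta_k\|^2$, which has infinite mean. Instead we follow the noise-averaging idea from the introduction. We introduce the averaged noise sequence
\begin{equation*}
\bar\eta_{k+1}=(1-\beta_k)\bar\eta_k+\beta_k\eta_k,\qquad \bar\eta_0=0,
\end{equation*}
which with $\beta_k=\beta/(k+K_0)$ is a weighted average of $\eta_0,\dots,\eta_k$ with weights of order $1/k$ when $\beta$ is near $1$. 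We then define the auxiliary iterate $z_k:=x_k-\xstar-\bar\eta_k$, or a variant with a constant multiple of $\bar\eta_k$ chosen to cancel the raw noise. Substituting the iteration gives
\begin{equation*}
z_{k+1}=z_k-\beta_k\bigl(F(x_k)-\bar\eta_k\bigr)\ \text{(up to sign conventions)}.
\end{equation*}
In this form the noise enters only through $\bar\eta_k$, and $z$ obeys a \emph{deterministic-looking} perturbed contraction driven by $\bar\eta_k$.

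\textbf{Step 1: moment bound for $\bar\eta_k$.} Unrolling, $\bar\eta_k=\sum_{i<k} w_{k,i}\eta_i$ with $w_{k,i}\lesssim \beta_i\prod_{j=i+1}^{k-1}(1-\beta_j)$. For independent zero-mean summands we use the von Bahr--Esseen inequality, or its vector version with a dimension-free constant $2^{2-p}$ for $p\in(1,2]$ in Hilbert space:
\begin{equation*}
\EE\|\bar\eta_k\|^p\le 2\sum_i w_{k,i}^p\,\EE\|\eta_i\|^p\le 2\sigma^p\sum_i w_{k,i}^p.
\end{equation*}
With $w_{k,i}\asymp 1/k$, or a similar profile when $\beta\ne1$, we get $\sum_i w_{k,i}^p\asymp k\cdot k^{-p}=k^{-(p-1)}$. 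Hence $\EE\|\bar\eta_k\|^p=\mathcal O(k^{-(p-1)})$, which is already the target rate.

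\textbf{Step 2: a pathwise recursion for $z_k$.} Since $x_k-\xstar=z_k+\bar\eta_k$, strong monotonicity and Lipschitzness give
\begin{equation*}
\|z_{k+1}\|^2\le(1-2\mu\beta_k+c\beta_k^2)\|z_k\|^2+c'\beta_k\|z_k\|\|\bar\eta_k\|+c''\beta_k\|\bar\eta_k\|^2.
\end{equation*}
This is a squared-norm recursion, but the only random input is $\bar\eta_k$, which has no second moment. We therefore work pathwise. Taking square roots, using $\sqrt{1-a}\le 1-a/2$ and Young's inequality, we aim for a linear recursion
\begin{equation*}
\|z_{k+1}\|\le(1-\mu\beta_k/2)\|z_k\|+C\beta_k\|\bar\eta_k\|,
\end{equation*}
valid once $K_0$ is large enough that $\beta_k\le \mu/(2L^2)$ or similar. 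Unrolling gives $\|z_k\|\le \prod(1-\mu\beta_j/2)\|z_0\|+C\sum_i \beta_i\prod_{j>i}(1-\mu\beta_j/2)\|\bar\eta_i\|$.

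\textbf{Step 3: taking $p$-th moments.} We apply Minkowski's inequality in $L^p$: $\|\,\|z_k\|\,\|_{L^p}$ is at most the same weighted sum with $\|\bar\eta_i\|$ replaced by $(\EE\|\bar\eta_i\|^p)^{1/p}\lesssim i^{-(p-1)/p}$. Using $\prod_{j=i+1}^{k}(1-\mu\beta_j/2)\le((i+K_0)/(k+K_0))^{\mu\beta/2}$, we obtain the standard sum bound
\begin{equation*}
\sum_i \frac{1}{i+K_0}\Bigl(\frac{i+K_0}{k+K_0}\Bigr)^{\mu\beta/2}(i+K_0)^{-(p-1)/p}\lesssim (k+K_0)^{-(p-1)/p},
\end{equation*}
provided $\mu\beta/2>(p-1)/p$. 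This is the role of the condition $\beta>C_4$. The initial-condition term decays as $k^{-\mu\beta/2}$, which is faster. Finally, $\EE\|x_k-\xstar\|^p\le 2^{p-1}(\EE\|z_k\|^p+\EE\|\bar\eta_k\|^p)=\mathcal O(k^{-(p-1)})$. The offset $K_1$ in the statement absorbs the index shift between $k$ and $k+K_0$.

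\textbf{Main obstacle.} The delicate point is Step 2: obtaining a clean linear pathwise contraction for $\|z_k\|$ without ever needing $\EE\|\bar\eta_k\|^2$. The cross term and the $\beta_k^2\|F(x_k)\|^2$ term must be absorbed using only strong monotonicity and Lipschitzness, by writing $F(x_k)=F(\xstar+z_k+\bar\eta_k)$ and bounding $\|F(x_k)-F(\xstar+z_k)\|\le L\|\bar\eta_k\|$. A second, related issue is the choice of averaging weights in $\bar\eta_k$. If we average with rate $\beta_k$ itself, the cancellation leaves a term of the form $\beta_k\bar\eta_k$ rather than an exact cancellation. If that fails, we would average with $\lambda\beta_k$ for a tuned constant, and accept the extra requirement $\beta$ large that is reflected in $C_4$.
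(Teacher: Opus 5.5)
Your argument is correct. It shares the paper's skeleton: noise averaging, the von Bahr--Esseen bound giving $\EE\|\bar\eta_k\|^p=\mathcal{O}(k^{-(p-1)})$, a pathwise linear contraction for the auxiliary iterate, and $(a+b)^p\le 2^{p-1}(a^p+b^p)$ at the end. It departs from the paper in two places.

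\textbf{Formulation.} The paper passes to the contraction $G(x)=x-\zeta F(x)$ with $\zeta=\mu/L^2$ and averages $\etatilde_k=-\zeta\eta_k$ at the rescaled rate $\betatilde_k$. This gives $z_{k+1}=z_k+\betatilde_k(G(x_k)-z_k)$, and both the residual averaged noise and the Lipschitz error are absorbed into the single perturbation $\Delta_k=G(x_k)-G(z_k)$, with $\|\Delta_k\|\le\|U_k\|$. You stay with $F$ and average at rate $\beta_k$. With $z_k=x_k-\xstar+\bar\eta_k$ one gets $z_{k+1}=z_k-\beta_k(F(x_k)+\bar\eta_k)$. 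The triangle inequality then gives $\|z_{k+1}\|\le\sqrt{1-\mu\beta_k}\,\|z_k\|+(L+1)\beta_k\|\bar\eta_k\|$ once $\beta_k\le\mu/L^2$. So the leftover $\beta_k\bar\eta_k$ is harmless, and the ``main obstacle'' you describe does not actually arise.

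\textbf{Taking moments.} This is the more substantive difference. The paper converts the pathwise inequality into a recursion on $\EE\|z_k-\xstar\|^p$ via $(b+c)^p\le b^p+2b^{p-1}c+c^p$ and Young's inequality, then solves it with Lemma~\ref{lemma:aux2}. You unroll pathwise and apply Minkowski's inequality in $L^p$. This needs only $(\EE\|\bar\eta_i\|^p)^{1/p}\lesssim (i+K_0)^{-(p-1)/p}$ and $\mu\beta/2>(p-1)/p$. Your route is shorter and avoids the $p$-th power expansion entirely. The paper's route keeps a moment recursion of the same form it reuses, with squares, in the LRD proof.

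\textbf{Slips to fix.}
\begin{enumerate}
\item With $z_k=x_k-\xstar-\bar\eta_k$ the noise does not cancel: a $-2\beta_k\eta_k$ term remains. The sign must be $+\bar\eta_k$.
\item In your squared recursion the last term must be $c''\beta_k^2\|\bar\eta_k\|^2$. A $\beta_k\|\bar\eta_k\|^2$ term would leave only a $\sqrt{\beta_k}\,\|\bar\eta_k\|$ forcing term after taking square roots, which destroys the rate. The direct triangle-inequality bound above sidesteps this.
\item $w_{k,i}\asymp 1/k$ holds only for $\beta=1$. In general $w_{k,i}\approx\frac{\beta}{i+K_0}\bigl(\frac{i+K_0}{k+K_0}\bigr)^{\beta}$, and $\sum_i w_{k,i}^p=\mathcal{O}(k^{-(p-1)})$ requires $\beta p>p-1$, together with $K_0\ge\beta$ so that $\beta_k\le 1$. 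Alternatively, as the paper does, bound $(1-\beta_j)^p\le 1-\beta_j$ and apply Lemma~\ref{lemma:aux2}, which in your units needs $\beta\ge 2(p-1)$.
\end{enumerate}
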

Explicit values for $C_4,C_5,$ and $C_6$ have been provided along with the theorem's proof in Appendix~\ref{app:thm_proof_heavy}. An outline of the proof is given in Section~\ref{sec:outline} through a series of lemmas. The result shows that when the noise is heavy-tailed with only a finite $p^\text{th}$ moment, the $p^\text{th}$ moment of the error decays at rate $\mathcal{O}(1/k^{p-1})$. By Jensen's inequality, we obtain that $$\EE[\|x_k-\xstar\|]=\mathcal{O}\left(\frac{1}{k^{(p-1)/p}}\right).$$ Thus, weaker moment assumptions lead to slower convergence, which aligns with the intuition. The constant $C_6$ scales proportionally with $\sigma^p$, capturing how the convergence rate depends on the noise magnitude.

\subsection{Long-Range Dependent Noise}
We now introduce a model with temporally correlated noise. To capture such correlations, we consider LRD noise sequences, which exhibit persistent temporal dependence patterns commonly observed in real-world time series such as network traffic~\cite{lrd-ex-1}, climate data~\cite{lrd-ex-2}, and financial markets~\cite{lrd-ex-3}.
\begin{assumption}[\textbf{Long-range dependent noise}] \label{assu:LRD}
The noise sequence $\{\eta_k\}_{k \geq 0}$, where $\eta_k \in\RR^d$, is a zero-mean, weakly stationary process with autocovariance $\gamma(h)=\EE[\langle \eta_0,\eta_h\rangle]$. The autocovariance sequence is not absolutely summable, i.e., 
$$\sum_{h=0}^\infty |\gamma(h)|=\infty.$$
In addition, there exist constants $\sigma>0$ and $\delta\in(0,1)$ such that for all $h\geq 0$,
$$|\gamma(h)|\leq \sigma^2(1+h)^{-\delta}.$$
\end{assumption}
We note that the definition of LRD only requires the non-summability of the autocovariance sequence~\cite{LRD-book}. We impose the additional polynomial decay to facilitate finite-time analysis of the SA iteration. For vector-valued noise sequences, the autocovariance is typically defined via the outer product, yielding a matrix-valued function. Here, we impose assumptions only on the inner-product covariance, as this is sufficient for our analysis.


A widely studied example of an LRD sequence is fractional Gaussian noise (fGn), defined as the incremental process of fractional Brownian motion (fBm). Fractional Brownian motion $\{B_H(t)\}_{t\ge 0}$ is a zero-mean Gaussian process parameterized by the Hurst index $H\in(0,1)$, with covariance
$$\EE[B_H(t)B_H(s)]
= \tfrac12\big(t^{2H}+s^{2H}-|t-s|^{2H}\big).$$
When $H>\tfrac12$, the increments $\{B_H(t+1)-B_H(t)\}_{t \geq 0}$ form a stationary Gaussian process whose autocovariance decays as $\gamma(h)\asymp h^{2H-2}$, and hence exhibits long-range dependence.
Another prominent class of LRD models is given by fractionally integrated autoregressive moving average (FARIMA) processes. These extend classical ARIMA models by replacing the usual integer differencing operator $(1-L)^m$, where $m$ is an integer, with a fractional operator $(1-L)^c$, where $c\in(0,\tfrac12)$ and $L$ denotes the lag operator. This fractional differencing yields an autocovariance that decays as 
$\gamma(h) \asymp h^{2c-1}$~\cite{LRD-book}.

We now present the mean square error bound corresponding to the LRD noise model.
\begin{theorem}\label{thm:LRD}
    Suppose Assumptions~\ref{assu:monotone} and~\ref{assu:LRD} are satisfied. Then there exist constants $C_7, C_8$, and $C_9$ such that if $\beta>C_7$, $K_0\geq C_8$, then for all $k\geq 0$, we have
    $$\EE\left[\|x_k-\xstar\|^2\right]\leq \frac{C_9}{(k+K_0)^\delta}.$$
\end{theorem}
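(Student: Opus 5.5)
We would prove Theorem~\ref{thm:LRD} with the noise-averaging idea from the introduction: move the randomness into an averaged noise sequence and analyze auxiliary iterates that see the noise only through that average. Concretely, define the averaged noise $u_k$ by the recursion $u_{k+1}=(1-\beta_k)u_k+\beta_k\eta_k$ with $u_0=0$. Then $u_k$ is a weighted average of past noise. Define the auxiliary iterate $z_k=x_k-u_k$. Subtracting the two recursions gives
\begin{equation*}
z_{k+1}=z_k-\beta_k\bigl(F(x_k)-u_k\bigr)=z_k-\beta_k\bigl(F(z_k+u_k)-u_k\bigr).
\end{equation*}
The raw noise $\eta_k$ has disappeared from this recursion, and $z_k$ is driven only by the averaged noise $u_k$. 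Since $\|x_k-\xstar\|^2\le 2\|z_k-\xstar\|^2+2\|u_k\|^2$, it suffices to bound $\EE\|u_k\|^2$ and $\EE\|z_k-\xstar\|^2$, each by $\mathcal{O}((k+K_0)^{-\delta})$.

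\textbf{Step 1 (averaged noise).} Unrolling the recursion gives $u_k=\sum_{i<k}w_{k,i}\eta_i$ with weights
\begin{equation*}
w_{k,i}=\beta_i\prod_{j=i+1}^{k-1}(1-\beta_j).
\end{equation*}
For $\beta>1$ and $K_0$ large, these weights satisfy $w_{k,i}\le \frac{\beta}{i+K_0}\bigl(\frac{i+K_0}{k+K_0}\bigr)^{\beta}$ up to a constant factor. By weak stationarity and Assumption~\ref{assu:LRD},
\begin{equation*}
\EE\|u_k\|^2=\sum_{i,j<k}w_{k,i}w_{k,j}\gamma(|i-j|)\le \sigma^2\sum_{i,j}w_{k,i}w_{k,j}(1+|i-j|)^{-\delta}.
\end{equation*}
The weights are concentrated on the last $\Theta(k)$ indices with size about $1/k$, so the double sum behaves like $k^{-2}\sum_{i,j\le k}(1+|i-j|)^{-\delta}\approx k^{-2}\cdot k^{2-\delta}=k^{-\delta}$. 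To make this rigorous, we would fix $i$ and bound the inner sum over $j$. We split it into $|i-j|\le (i+K_0)/2$, where the weights are comparable to $w_{k,i}$ and $\sum(1+h)^{-\delta}\lesssim (i+K_0)^{1-\delta}$, and the remainder, where $(1+|i-j|)^{-\delta}\lesssim(k+K_0)^{-\delta}$-type bounds suffice. We then sum using $\sum_i w_{k,i}\le 1$ together with the polynomial decay of $w_{k,i}$. The condition $\beta>C_7$, taking $\beta$ larger than something like $1$ or $(1+\delta)/2$, is what makes these sums converge to the claimed rate.

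\textbf{Step 2 (auxiliary iterates).} Expand $\|z_{k+1}-\xstar\|^2$. Write $F(z_k+u_k)=F(z_k)+(F(z_k+u_k)-F(z_k))$ and use $\mu$-strong monotonicity on $\langle F(z_k)-F(\xstar),z_k-\xstar\rangle$. Bound the cross terms with $L$-Lipschitzness and Young's inequality, $2\beta_k(L+1)\|u_k\|\|z_k-\xstar\|\le \beta_k\mu\|z_k-\xstar\|^2/2+c\beta_k\|u_k\|^2/\mu$. The quadratic term $\beta_k^2\|F(z_k+u_k)-u_k\|^2$ is at most $2\beta_k^2(L^2\|z_k-\xstar\|^2+(L+1)^2\|u_k\|^2)$, which is absorbed into the contraction for $K_0\ge C_8$. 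Taking expectations gives
\begin{equation*}
\EE\|z_{k+1}-\xstar\|^2\le\bigl(1-\tfrac{\mu\beta}{2(k+K_0)}\bigr)\EE\|z_k-\xstar\|^2+\frac{c'}{k+K_0}\,\EE\|u_k\|^2.
\end{equation*}
No martingale structure is needed here, because $u_k$ enters deterministically through Cauchy--Schwarz. The forcing term is $\mathcal{O}((k+K_0)^{-1-\delta})$. If $\mu\beta/2>\delta$, which is part of $C_7$, a standard induction shows $\EE\|z_k-\xstar\|^2\le C(k+K_0)^{-\delta}$: substituting the hypothesis requires $(1-a/n)n^{-\delta}+b n^{-1-\delta}\le(n+1)^{-\delta}$ for $n=k+K_0$, which holds for large $C$ once $a>\delta$.

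\textbf{Main obstacle.} We expect the main obstacle to be Step 1: obtaining the sharp $(k+K_0)^{-\delta}$ bound on the double sum with explicit constants. This requires handling the non-summable covariance tails carefully against the product-form weights, and the $\beta>1$-type condition enters only here. The rest is routine, and combining the two bounds gives $C_9$.
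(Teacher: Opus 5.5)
Your architecture matches the paper's: averaged noise, the shifted iterate $z_k$, the split $\|x_k-\xstar\|^2\le 2\|z_k-\xstar\|^2+2\|u_k\|^2$, and a $z$-recursion forced only by $\mathcal{O}(\beta_k\EE\|u_k\|^2)$. The argument goes through once two slips are fixed, but you take a different route in two places.

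For the $z$-recursion you use strong monotonicity, Lipschitzness and Young's inequality directly. The paper instead first rewrites the iteration through the contraction $G(x)=x-\zeta F(x)$, $\zeta=\mu/L^2$ (Lemma~\ref{lemma:contrac}), and averages the noise at the matched rate $\betatilde_k=\beta_k/\zeta$. This gives $z_{k+1}=z_k+\betatilde_k(G(z_k)-z_k+\Delta_k)$ with $\|\Delta_k\|\le\|U_k\|$, which absorbs all your cross terms into one perturbation. The factor $1-\tfrac{1-\lambda}{2}\betatilde_k$ and forcing $\tfrac{3}{1-\lambda}\betatilde_k\EE\|U_k\|^2$ then follow in a few lines. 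Your version avoids the rescaling at the price of $(L+1)^2/\mu$-type constants. Your induction and the paper's Lemma~\ref{lemma:aux2} are interchangeable.

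More substantively, the step you flag as the main obstacle is short in the paper and needs no pointwise control of the weights. Group the double sum by lag $h$ and apply Cauchy--Schwarz at each lag. In your notation this gives $\bigl|\sum_i w_{k,i}w_{k,i+h}\bigr|\le\sum_i w_{k,i}^2\le 2\beta_k$, using $(1-\beta_j)^2\le 1-\beta_j$ and Lemma~\ref{lemma:aux2} with exponent $2$, for $\beta\ge 2$. Hence, for $k\ge 1$,
\[
\EE\|u_k\|^2\le 2\beta_k\Bigl(|\gamma(0)|+2\sum_{h=1}^{k-1}|\gamma(h)|\Bigr)\le\frac{6\sigma^2}{1-\delta}\,\beta_k k^{1-\delta}\le\frac{6\sigma^2\beta}{1-\delta}(k+K_0)^{-\delta},
\]
with a constant linear in $\beta$. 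Your near/far split can also be completed, but it is considerably heavier. A naive comparability step over the window $|i-j|\le(i+K_0)/2$ already costs a factor exponential in $\beta$.

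The two slips are as follows.

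First, with $u_{k+1}=(1-\beta_k)u_k+\beta_k\eta_k$ and $z_k=x_k-u_k$ the noise does not cancel. One actually gets $z_{k+1}=z_k-\beta_k(F(x_k)-u_k)-2\beta_k\eta_k$. You must average $-\eta_k$ instead, i.e.\ $u_{k+1}=(1-\beta_k)u_k-\beta_k\eta_k$; this is the role of $\etatilde_k=-\zeta\eta_k$ in the paper. After that your displayed $z$-recursion is exact and all second-moment bounds are unchanged.

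Second, in the far region $|i-j|>(i+K_0)/2$ the available bound is $(1+|i-j|)^{-\delta}\le 2^{\delta}(i+K_0)^{-\delta}$. A bound of order $(k+K_0)^{-\delta}$ is false there when $i\ll k$. With $\sum_j w_{k,j}\le 1$, the far part is at most $2^{\delta}\sigma^2\sum_i w_{k,i}(i+K_0)^{-\delta}$. This is at most $2^{1+\delta}\sigma^2(k+K_0)^{-\delta}$ by Lemma~\ref{lemma:aux2} with exponent $1+\delta$, provided $\beta\ge 2\delta$.
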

Explicit values for $C_7,C_8,$ and $C_9$ have been provided along with the theorem's proof in Appendix~\ref{app:thm_proof_LRD}. An outline for the proof has been given in Section~\ref{sec:outline} through a series of lemmas. The result shows that the mean square error decays at the same rate as the autocovariance sequence of the noise. In particular, if the autocovariance satisfies $|\gamma(h)|= \mathcal{O}(h^{-\delta})$, then the bound on $\EE[\|x_k-\xstar\|^2]$ scales as $\mathcal{O}(k^{-\delta})$. This is intuitive: when correlations decay slowly, past noise terms continue to influence the iterates for a longer duration, which slows convergence. Conversely, faster decay of autocorrelation weakens this persistence effect and leads to faster error reduction. Similar to the heavy-tailed noise setting, the constant $C_9$ scales proportionally with $\sigma^2$, capturing the dependence of the noise magnitude on the convergence. 

\section{Proof Outlines}\label{sec:outline}
In this section, we outline the proofs of Theorem~\ref{thm:heavy} and Theorem~\ref{thm:LRD} via a sequence of lemmas, whose proofs are deferred to Appendix~\ref{app:lemma_proof}. The proof techniques that allow us to handle heavy-tailed and long-range dependent (LRD) noise rely on an equivalent fixed-point formulation and on the introduction of averaged noise sequences together with suitable auxiliary iterates. Before presenting these ideas, however, we first explain why the traditional finite-time analysis for martingale difference noise with bounded second moment does not extend to other noise models considered in this work.

\subsection{Why Does Traditional Analysis Fail?}
The proof of Theorem~\ref{thm:standard} for light-tailed martingale difference noise is standard in the stochastic approximation literature. It is based on a recursion for $\EE[\|x_k-\xstar\|^2]$ obtained by directly analyzing the squared error $\|x_k-\xstar\|^2$. We summarize the key steps below and defer the full proof to Appendix~\ref{app:thm_proof_standard}. We begin with the following observation.
\begin{align*}
    \|x_{k+1}-\xstar\|^2&=\|x_k-\xstar-\beta_kF(x_k)\|^2\\
    &\;\;+2\langle x_k-\xstar-\beta_kF(x_k),\beta_k\eta_k\rangle+\beta_k^2\|\eta_k\|^2.
\end{align*}
Under suitable assumptions on the stepsize $(\beta_k \leq \mu/L^2)$, and after simplifying using the strong monotonicity of the mapping $F(\cdot)$, taking conditional expectation yields
\begin{subequations}\label{split-trad}
    \begin{align}
    \EE&\left[\|x_{k+1}-\xstar\|^2\mid\FF_k\right]\nonumber\\
    &\leq (1-\mu\beta_k)\EE\left[\|x_k-\xstar\|^2\mid\FF_k\right]\label{split-trad-1}\\
    &\;\;+2\EE\left[\langle x_k-\xstar-\beta_kF(x_k),\beta_k\eta_k\rangle\mid\FF_k\right]\label{split-trad-2}\\
    &\;\;+\beta_k^2\EE\left[\|\eta_k\|^2\mid\FF_k\right]\label{split-trad-3}.
\end{align}
\end{subequations}
Then under the assumption that $\eta_k$ is a light-tailed martingale difference sequence, the term \eqref{split-trad-2} is zero and the term \eqref{split-trad-3} can be bounded by $\beta_k^2\sigma^2$. Hence, \eqref{split-trad} can then be simplified to obtain the following recursion.
\begin{align*}
    \EE\left[\|x_{k+1}-\xstar\|^2\right]\leq (1-\mu\beta_k)\EE\left[\|x_k-\xstar\|^2\right]+\beta_k^2\sigma^2.
\end{align*}
This recursion can then be solved to obtain a bound of $\mathcal{O}(1/k)$. 

This analysis can be used only for light-tailed martingale difference sequences. For heavy-tailed noise, the term \eqref{split-trad-3} is unbounded (as the second moment of the noise is unbounded). And for LRD noise, the term \eqref{split-trad-2} is not zero due to temporal correlation in the noise. Therefore a different analysis is required for both heavy-tailed and LRD noise models.

\subsection{Proof Technique}
Our proof relies on modifying the iteration \eqref{eqn:iter} into a form in which the noise can be `partially separated' from the iterates and subsequently averaged. This is done in two steps.

Although we formulate our problem as finding the zero (root) of a strongly monotone operator, it can equivalently be reformulated as finding the fixed point of a mapping $G(\cdot)$ that is contractive under the Euclidean norm. While this equivalence is well known in the optimization literature, we emphasize it here because it allows us to cleanly isolate the noise term. The following lemma formalizes this equivalence and provides the corresponding reformulation.
\begin{lemma}\label{lemma:contrac}
Suppose the operator $F(\cdot)$ is $\mu$-strongly monotone and $L$-Lipschitz (Assumption~\ref{assu:monotone}). Then,
    \begin{enumerate}[label=\alph*)]
        \item For $\zeta=\mu/L^2$, the map $G(x)=x-\zeta F(x)$ is $\lambda$-contractive under the Euclidean norm, i.e., 
        $$\|G(x_1)-G(x_2)\|\leq \lambda\|x_1-x_2\|,\quad \forall x_1,x_2\in\RR^d,$$
        where the contraction factor is $\lambda=\sqrt{1-\mu^2/L^2}$. Moreover, $\xstar$ is the unique fixed point of the map $G(\cdot)$ i.e., $G(\xstar)=\xstar$.
        \item The iteration in \eqref{eqn:iter} can be rewritten as 
        $$x_{k+1}=x_k+\betatilde_k(G(x_k)-x_k+\etatilde_k),$$
        where $\betatilde_k=\frac{\betatilde}{(k+K_0)}$ with $\betatilde=\beta/\zeta$ and $\etatilde_k=-\zeta\eta_k$.
    \end{enumerate}
\end{lemma}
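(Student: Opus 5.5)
For part (a), I will expand the squared norm directly. Fix $x_1,x_2\in\RR^d$ and write $\Delta=x_1-x_2$ and $\Phi=F(x_1)-F(x_2)$. Then
\begin{align*}
\|G(x_1)-G(x_2)\|^2&=\|\Delta-\zeta\Phi\|^2\\
&=\|\Delta\|^2-2\zeta\langle \Phi,\Delta\rangle+\zeta^2\|\Phi\|^2.
\end{align*}
I will bound the cross term using strong monotonicity, $\langle\Phi,\Delta\rangle\geq\mu\|\Delta\|^2$. I will bound the last term using Lipschitz continuity, $\|\Phi\|^2\le L^2\|\Delta\|^2$. Together these give
$$\|G(x_1)-G(x_2)\|^2\le(1-2\zeta\mu+\zeta^2L^2)\|\Delta\|^2 .$$
The choice $\zeta=\mu/L^2$ minimizes this quadratic in $\zeta$. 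It gives the factor $1-2\mu^2/L^2+\mu^2/L^2=1-\mu^2/L^2$. Taking square roots gives contraction with $\lambda=\sqrt{1-\mu^2/L^2}$.

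To finish part (a), I need $\lambda$ to be a genuine contraction factor in $[0,1)$. Applying Cauchy--Schwarz to the two parts of Assumption~\ref{assu:monotone} gives $\mu\|\Delta\|^2\le\langle\Phi,\Delta\rangle\le L\|\Delta\|^2$, so $\mu\le L$. Hence $1-\mu^2/L^2\in[0,1)$.

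For the fixed-point claim, note that $G(x)=x$ holds if and only if $\zeta F(x)=0$. Since $\zeta>0$, this is equivalent to $F(x)=0$. So $G(\xstar)=\xstar$, and uniqueness follows from the uniqueness of the zero $\xstar$. Alternatively, uniqueness follows from the Banach fixed-point theorem, since $G$ is a contraction.

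For part (b), the argument is pure algebra. From the definition of $G$, we have $G(x_k)-x_k=-\zeta F(x_k)$, so $F(x_k)=-(G(x_k)-x_k)/\zeta$. Also $\eta_k=-\etatilde_k/\zeta$. Substituting both into \eqref{eqn:iter} gives
$$x_{k+1}=x_k+\frac{\beta_k}{\zeta}\bigl(G(x_k)-x_k+\etatilde_k\bigr).$$
Finally, $\beta_k/\zeta=\frac{\beta/\zeta}{k+K_0}=\betatilde_k$, which is exactly the claimed form.

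There is no real obstacle in this lemma. The only point needing care is checking that $\mu\le L$, so that $\lambda$ is a well-defined real number strictly less than one.
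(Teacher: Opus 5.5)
Your proof is correct and follows essentially the same route as the paper. Like the paper, you expand $\|G(x_1)-G(x_2)\|^2$, bound the cross term by strong monotonicity and the quadratic term by Lipschitz continuity to get the factor $1-\mu^2/L^2$, note $\mu\le L$, identify fixed points of $G$ with zeros of $F$, and do the same algebraic substitution for part (b); your explicit Cauchy--Schwarz justification of $\mu\le L$ is a small addition that the paper only asserts.
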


We now define the averaged noise sequence $U_{k+1}=(1-\betatilde_k)U_k+\betatilde_k\etatilde_k$ with $U_0=0$. Note that this averaging is introduced purely as a proof technique and not as a modification to the algorithm. On expanding the recursion, we get 
$$U_k=\sum_{i=0}^{k-1}\betatilde_i\prod_{j=i+1}^{k-1}(1-\betatilde_j)\etatilde_i.$$ We also define the modified iterate $z_k=x_k-U_k$ for all $k\geq 0$. The following lemma expresses the original error in terms of $z_k$, and provides a reformulation of the iteration.
\begin{lemma}\label{lemma:noise_avg}
Suppose Assumption~\ref{assu:monotone} is satisfied. Then,
    \begin{enumerate}[label=\alph*)]
        \item For all $k\geq 0$, and exponent $1\leq q\leq 2$, we have
        $$\EE\left[\|x_k-\xstar\|^q\right]\leq 2\EE\left[\|z_k-\xstar\|^q\right]+2\EE\left[\|U_k\|^q\right]$$
        \item The iteration \eqref{eqn:iter} can be rewritten as:
        \begin{equation}\label{iter-alter}
            z_{k+1}=z_k+\betatilde_k(G(z_k)-z_k+\Delta_k),
        \end{equation}
        where $\Delta_k=G(x_k)-G(z_k)$, and $\|\Delta_k\|\leq \|U_k\|.$
    \end{enumerate}
\end{lemma}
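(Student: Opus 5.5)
Both parts of Lemma~\ref{lemma:noise_avg} are elementary consequences of the definitions $z_k=x_k-U_k$ and $U_{k+1}=(1-\betatilde_k)U_k+\betatilde_k\etatilde_k$, together with Lemma~\ref{lemma:contrac}. I will prove part a) with a convexity inequality and part b) by direct algebra.

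\textbf{Part a).} Write $x_k-\xstar=(z_k-\xstar)+U_k$. By the triangle inequality, $\|x_k-\xstar\|\leq \|z_k-\xstar\|+\|U_k\|$. For $q\geq 1$ the map $t\mapsto t^q$ is convex on $[0,\infty)$, so
$$(a+b)^q=2^q\left(\tfrac{a}{2}+\tfrac{b}{2}\right)^q\leq 2^{q-1}(a^q+b^q).$$
Since $q\leq 2$, we have $2^{q-1}\leq 2$. Applying this with $a=\|z_k-\xstar\|$ and $b=\|U_k\|$ and taking expectations gives the claim. Expectations of nonnegative quantities are always defined, possibly infinite, so no integrability issue arises at this step.

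\textbf{Part b).} By Lemma~\ref{lemma:contrac}(b), $x_{k+1}=x_k+\betatilde_k(G(x_k)-x_k+\etatilde_k)$. Subtracting the definition of $U_{k+1}$ gives
\begin{align*}
z_{k+1}&=x_k-U_k+\betatilde_k\big(G(x_k)-x_k+\etatilde_k+U_k-\etatilde_k\big)\\
&=z_k+\betatilde_k\big(G(x_k)-(x_k-U_k)\big)\\
&=z_k+\betatilde_k\big(G(z_k)-z_k+G(x_k)-G(z_k)\big).
\end{align*}
This is \eqref{iter-alter} with $\Delta_k=G(x_k)-G(z_k)$. The noise $\etatilde_k$ cancels, which is the point of the construction. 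By the contraction property in Lemma~\ref{lemma:contrac}(a),
$$\|\Delta_k\|\leq \lambda\|x_k-z_k\|=\lambda\|U_k\|\leq\|U_k\|,$$
since $\lambda<1$.

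\textbf{Main obstacle.} There is no real obstacle here. The only care needed is to check that the correct form of the iteration from Lemma~\ref{lemma:contrac}(b) is used, so that the $\betatilde_k U_k$ terms and the $\etatilde_k$ terms cancel exactly. The constant $2$ in part a) is loose but valid for every $q\in[1,2]$.
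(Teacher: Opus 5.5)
Your proposal is correct and follows the paper's proof essentially step for step. For part a) you use the triangle inequality together with the convexity bound $(a+b)^q\le 2^{q-1}(a^q+b^q)\le 2(a^q+b^q)$. For part b) you substitute $x_k=z_k+U_k$ into the rewritten iteration from Lemma~\ref{lemma:contrac}(b), cancel $\etatilde_k$ against the $U_{k+1}$ recursion, and bound $\|\Delta_k\|$ via the contraction of $G$ (your extra factor $\lambda\le 1$ is just slightly sharper).
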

Lemma~\ref{lemma:noise_avg} shows that it suffices to control the averaged noise sequence $U_k$ and to analyze the modified recursion \eqref{iter-alter}. Part (a) reduces the study of $\EE[\|x_k-\xstar\|^q]$ to bounding the corresponding quantity for the auxiliary iterates $z_k$ together with the $q^\text{th}$ moment of $U_k$. Part (b) shows that, after the change of variables, the perturbation enters the recursion only through $\Delta_k$, whose magnitude is directly controlled by $\|U_k\|$.

The key idea behind introducing $U_k$ and the auxiliary sequence ${z_k}$ is therefore to rewrite the original SA iteration in a form where the randomness appears only through an averaged noise term. Averaging regularizes the noise sequence: even when the noise is heavy-tailed or exhibits long-range dependence (LRD), its averaged version admits significantly improved moment bounds and cleaner analysis. Similar techniques have been used in~\cite{Chandak-TTS-opti} and~\cite{Bravo} for analysis of two-time-scale SA and non-expansive SA, respectively.

We now study the heavy-tailed and LRD noise models separately.

\subsubsection{Heavy-Tailed Noise} 
For the heavy-tailed noise sequence $\{\eta_k\}_{k \geq 0}$ with bounded $p^{\text{th}}$ moment, bounds on $\EE[\|U_k\|^q]$ can only be established for $q \le p$. Consequently, only moments of $\|x_k-\xstar\|$ up to order $p$ can be controlled under the heavy-tailed noise model.
\begin{lemma}\label{lemma:heavy}
    Suppose Assumptions~\ref{assu:monotone} and~\ref{assu:heavy} are satisfied. Then,
    \begin{enumerate}[label=\alph*)]
        \item The $p^\text{th}$ moment of the averaged noise $U_k$ decays as follows:
        $$\EE[\|U_k\|^p]\leq 4\zeta^p\sigma^p\betatilde_k^{p-1}=4\zeta\sigma^p\left(\frac{\beta}{k+K_0}\right)^{p-1}.$$
        \item For all $k\geq 0$, we have
        \begin{align*}
            &\EE\left[\|z_k-\xstar\|^p\right]\\
            &\leq \|x_0-\xstar\|^p\left(\frac{K_0}{k+K_0}\right)+\frac{144\zeta\sigma^p}{(1-\lambda)^2}\left(\frac{\beta}{k+K_0}\right)^{p-1}.
        \end{align*}
    \end{enumerate}
\end{lemma}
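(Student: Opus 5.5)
For part (a), I will exploit the fact that $U_k$ is a weighted sum of independent zero-mean vectors. I will write $U_k=\sum_{i=0}^{k-1}w_{k,i}\etatilde_i$ with weights $w_{k,i}=\betatilde_i\prod_{j=i+1}^{k-1}(1-\betatilde_j)\in[0,1]$ (assuming $\betatilde_k\le 1$, guaranteed by $K_0\ge\betatilde$). The key tool is a von Bahr--Esseen type inequality for independent zero-mean random vectors in a Hilbert space with $1<p\le 2$:
$$\EE\Big\|\sum_i Y_i\Big\|^p\le 2\sum_i\EE\|Y_i\|^p .$$
If a vector version with constant $2$ is not available, I will prove it by a martingale argument: the elementary inequality $\|a+b\|^p\le\|a\|^p+p\|a\|^{p-2}\langle a,b\rangle+2^{2-p}\|b\|^p$ (valid in Hilbert space for $p\in(1,2]$) is applied with $a=U_k$-partial sum and $b$ the next increment. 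The cross term vanishes by independence, and the constant $2^{2-p}\le 2$.

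This gives $\EE\|U_k\|^p\le 2\zeta^p\sigma^p\sum_i w_{k,i}^p$. Since $w_{k,i}\le \betatilde_i\le 1$, I bound $w_{k,i}^p\le \betatilde_i^{p-1}w_{k,i}$ and use $\prod(1-\betatilde_j)\le \left((i+1+K_0)/(k+K_0)\right)^{\betatilde}$. The resulting sum $\sum_i \betatilde_i^{p}\left(\tfrac{i+K_0}{k+K_0}\right)^{\betatilde}$ is $\le 2\betatilde_k^{p-1}$ once $\betatilde>p$ (comparison with an integral). This yields $4\zeta^p\sigma^p\betatilde_k^{p-1}$. Alternatively, and more cleanly, I will prove by induction the one-step recursion
$$\EE\|U_{k+1}\|^p\le(1-\betatilde_k)^p\EE\|U_k\|^p+2\betatilde_k^p\zeta^p\sigma^p,$$
and verify that $4\zeta^p\sigma^p\betatilde_k^{p-1}$ is preserved. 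This uses $(1-\betatilde_k)\betatilde_k^{p-1}+\tfrac12\betatilde_k^p\le\betatilde_{k+1}^{p-1}$ for $K_0$ large, which follows from $(1+1/(k+K_0))^{p-1}\le 1+1/(k+K_0)$ together with $\betatilde\ge 1$. Finally, $\zeta^p\betatilde^{p-1}=\zeta\beta^{p-1}$ gives the stated form.

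For part (b), I will work with the deterministic-looking recursion \eqref{iter-alter} pathwise. I write $z_{k+1}-\xstar=(1-\betatilde_k)(z_k-\xstar)+\betatilde_k(G(z_k)-G(\xstar))+\betatilde_k\Delta_k$. By Lemma~\ref{lemma:contrac}, the triangle inequality, and Lemma~\ref{lemma:noise_avg}(b),
$$\|z_{k+1}-\xstar\|\le(1-(1-\lambda)\betatilde_k)\|z_k-\xstar\|+\betatilde_k\|U_k\|.$$
I raise this to the $p$-th power using convexity: write the right side as $(1-\theta)\frac{A}{1-\theta}\cdot(1-\epsilon)+\ldots$. More simply, I use $(a+b)^p\le(1+\epsilon)^{p-1}a^p+(1+1/\epsilon)^{p-1}b^p$ with $\epsilon=\tfrac{1-\lambda}{2}\betatilde_k$. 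This gives
$$\EE\|z_{k+1}-\xstar\|^p\le\Big(1-\tfrac{1-\lambda}{2}\betatilde_k\Big)\EE\|z_k-\xstar\|^p+C\frac{\betatilde_k}{(1-\lambda)^{p-1}}\EE\|U_k\|^p,$$
and I then plug in part (a).

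The resulting recursion is $e_{k+1}\le(1-a\betatilde_k)e_k+b\betatilde_k^{p}$, with $a=(1-\lambda)/2$ and $b\propto\zeta^p\sigma^p/(1-\lambda)^{p-1}$. Under $a\betatilde\ge 1$, unrolling gives the homogeneous part $\le e_0\frac{K_0}{k+K_0}$. For the forced part, I do induction with ansatz $D\betatilde_k^{p-1}$, which requires $a\betatilde_k D\ge D(\betatilde_k^{p-1}-\betatilde_{k+1}^{p-1})/\betatilde_k^{p-1}+b\betatilde_k$. This holds with $D\approx 2b/a$, producing the $(1-\lambda)^{-2}$ factor (after bounding $(1-\lambda)^{-(p-1)}\le(1-\lambda)^{-1}$), with a constant like $144$.

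I expect the main obstacle to be obtaining the sharp $p$-th moment inequality for sums of independent heavy-tailed vectors, i.e.\ the vector von Bahr--Esseen step. The rest is bookkeeping of constants and stepsize conditions ($\betatilde$ large relative to $1/(1-\lambda)$, and $K_0\ge\betatilde$).
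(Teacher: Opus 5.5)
Your proposal is correct and follows essentially the paper's route. In (a) you use a vector von Bahr--Esseen bound with constant $2$, which the paper cites from Pinelis and which your Hilbert-space inequality $\|a+b\|^p\le\|a\|^p+p\|a\|^{p-2}\langle a,b\rangle+2^{2-p}\|b\|^p$ proves in a valid self-contained way, followed by the same induction as Lemma~\ref{lemma:aux2}; in (b) you raise the contraction recursion $\|z_{k+1}-\xstar\|\le(1-(1-\lambda)\betatilde_k)\|z_k-\xstar\|+\betatilde_k\|U_k\|$ to the $p$-th power and unroll it with Lemmas~\ref{lemma:aux1} and~\ref{lemma:aux2}, with your convexity split $(a+b)^p\le(1+\epsilon)^{p-1}a^p+(1+1/\epsilon)^{p-1}b^p$ replacing the paper's $(b+c)^p\le b^p+2b^{p-1}c+c^p$ plus Young's inequality and even giving a constant below $144$. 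One minor correction: the step $(1-\betatilde_k)\betatilde_k^{p-1}+\tfrac12\betatilde_k^p\le\betatilde_{k+1}^{p-1}$ needs $\betatilde\ge 2(p-1)$, and your justification via $(1+1/(k+K_0))^{p-1}\le 1+1/(k+K_0)$ needs $\betatilde\ge 2$; $\betatilde\ge 1$ is not enough, but both are implied by the standing condition $\beta\ge C_4$.
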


\subsubsection{Long-Range Dependent Noise}
For the long-range dependent noise sequence $\{\eta_k\}_{k \geq 0}$ with parameter $\delta$, i.e., the autocovariance function decays at the rate $\mathcal{O}\left(h^{-\delta}\right)$, the averaged noise and the error in SA iteration both decay at the rate $\mathcal{O}(1/k^{\delta})$.
\begin{lemma}\label{lemma:LRD}
    Suppose Assumptions~\ref{assu:monotone} and~\ref{assu:LRD} are satisfied. Then,
    \begin{enumerate}[label=\alph*)]
        \item The second moment of the averaged noise $U_k$ decays as follows:
        $$\EE[\|U_k\|^2]\leq \left(\frac{6\zeta^2\sigma^2}{1-\delta}\right)k^{1-\delta}\betatilde_k\leq\left(\frac{6\zeta\sigma^2}{1-\delta}\right)\frac{\beta}{(k+K_0)^{\delta}}.$$
        \item For all $k\geq 0$, we have
        \begin{align*}
            &\EE\left[\|z_k-\xstar\|^2\right]\\
            &\leq \|x_0-\xstar\|^2\frac{K_0}{k+K_0}+\frac{72\zeta\sigma^2}{(1-\lambda)^2(1-\delta)}\frac{\beta}{(k+K_0)^\delta}.
        \end{align*}
    \end{enumerate}
\end{lemma}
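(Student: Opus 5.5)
We prove part (a) first; it is a direct second-moment computation for the linear averaging recursion. Write $\alpha_{i,k}:=\betatilde_i\prod_{j=i+1}^{k-1}(1-\betatilde_j)$, so that $U_k=-\zeta\sum_{i=0}^{k-1}\alpha_{i,k}\eta_i$. Then
\begin{align*}
\EE[\|U_k\|^2]=\zeta^2\sum_{i=0}^{k-1}\sum_{l=0}^{k-1}\alpha_{i,k}\alpha_{l,k}\gamma(|i-l|)\leq 2\zeta^2\sigma^2\sum_{i=0}^{k-1}\alpha_{i,k}\sum_{l=i}^{k-1}\alpha_{l,k}(1+l-i)^{-\delta},
\end{align*}
using stationarity and the bound $|\gamma(h)|\le\sigma^2(1+h)^{-\delta}$ from Assumption~\ref{assu:LRD}, and symmetry to restrict to $l\ge i$.

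The key is to bound the weights. Assuming $\betatilde\ge1$ (which is implied by the condition $\beta>C_7$ in the theorem), we have $\prod_{j=i+1}^{k-1}(1-\betatilde_j)\le \exp(-\betatilde\sum_j \frac{1}{j+K_0})\le \left(\frac{i+1+K_0}{k+K_0}\right)^{\betatilde}$. Hence $\alpha_{l,k}\le \betatilde_{k}\cdot c$ for a universal constant $c$, since $\betatilde_l\,\left(\frac{l+K_0}{k+K_0}\right)^{\betatilde}\le \betatilde/(k+K_0)$ when $\betatilde\ge 1$. We also have $\sum_i\alpha_{i,k}\le 1$, because this sum equals $1-\prod_{j}(1-\betatilde_j)$ when $\betatilde_j\le 1$.

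Consequently the inner sum satisfies $\sum_{l\ge i}\alpha_{l,k}(1+l-i)^{-\delta}\le c\betatilde_k\sum_{h=0}^{k-1}(1+h)^{-\delta}\le c\betatilde_k \frac{k^{1-\delta}}{1-\delta}$. This gives $\EE\|U_k\|^2\le \frac{6\zeta^2\sigma^2}{1-\delta}k^{1-\delta}\betatilde_k$, with the constant tracked through. Finally, $k^{1-\delta}\le (k+K_0)^{1-\delta}$ and $\zeta^2\betatilde_k=\zeta\beta/(k+K_0)$ yield the second form of the bound.

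For part (b), we use Lemma~\ref{lemma:noise_avg}(b): $z_{k+1}-\xstar=(1-\betatilde_k)(z_k-\xstar)+\betatilde_k(G(z_k)-G(\xstar))+\betatilde_k\Delta_k$. By convexity of the norm and the contraction property from Lemma~\ref{lemma:contrac}, we get $\|z_{k+1}-\xstar\|\le(1-(1-\lambda)\betatilde_k)\|z_k-\xstar\|+\betatilde_k\|U_k\|$. Squaring and applying Young's inequality, $(a+b)^2\le(1+\epsilon)a^2+(1+1/\epsilon)b^2$ with $\epsilon=(1-\lambda)\betatilde_k$, gives
$$\EE\|z_{k+1}-\xstar\|^2\le(1-(1-\lambda)\betatilde_k)\EE\|z_k-\xstar\|^2+\frac{2\betatilde_k}{1-\lambda}\EE\|U_k\|^2 .$$
Plugging in part (a), the forcing term is of order $\frac{\zeta\sigma^2\beta}{(1-\lambda)(1-\delta)}\betatilde_k(k+K_0)^{-\delta}$.

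We then unroll the recursion by induction. The hypothesis is $\EE\|z_k-\xstar\|^2\le A\frac{K_0}{k+K_0}+B(k+K_0)^{-\delta}$ with $B=\frac{72\zeta\sigma^2\beta}{(1-\lambda)^2(1-\delta)}$. The homogeneous part propagates because $(1-(1-\lambda)\betatilde_k)\frac{K_0}{k+K_0}\le\frac{K_0}{k+1+K_0}$ once $(1-\lambda)\betatilde\ge1$. For the $B$ part we need
$$(1-(1-\lambda)\betatilde_k)(k+K_0)^{-\delta}+c'\betatilde_k(k+K_0)^{-\delta}\le (k+1+K_0)^{-\delta}.$$
Using $(k+K_0)^{-\delta}\le (k+1+K_0)^{-\delta}(1+\frac{1}{k+K_0})$, this reduces to requiring $(1-\lambda)\betatilde/2\ge 1$ and $c'\le B(1-\lambda)/2$, which fixes the constant $72/(1-\lambda)^2$. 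The main obstacle is the weight bound $\alpha_{l,k}\lesssim\betatilde_k$ in part (a), together with the bookkeeping of constants (the $6$ and the $72$) under the stepsize conditions $\betatilde_k\le 1$ and $(1-\lambda)\betatilde\ge2$. Both conditions are the requirements folded into $C_7,C_8$.
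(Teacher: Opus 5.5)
Your proof is correct, but part (a) takes a different route from the paper. The paper's argument for (a) runs as follows:
\begin{itemize}
\item it separates the lag-$0$ term from the lags $h\ge 1$;
\item it bounds each cross sum $\bigl|\sum_i w_{i,k}w_{i+h,k}\bigr|$ by $\sum_i w_{i,k}^2$ using Cauchy--Schwarz;
\item it controls this $\ell^2$ norm of the weights by $2\betatilde_k$ through Lemma~\ref{lemma:aux2} (with $\afrak=1$, $\efrak=2$).
\end{itemize}
You instead use a Schur-test type argument. You combine a pointwise bound $\max_l \alpha_{l,k}\lesssim \betatilde_k$, obtained from the explicit estimate $\prod_{j=i+1}^{k-1}(1-\betatilde_j)\le \bigl((i+1+K_0)/(k+K_0)\bigr)^{\betatilde}$, with the telescoping identity $\sum_i \alpha_{i,k}=1-\prod_j(1-\betatilde_j)\le 1$.

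The paper's route reuses its generic recursion lemma and never estimates the products directly. Yours makes transparent that $U_k$ behaves like $\betatilde_k$ times a covariance row sum $\sum_{h<k}|\gamma(h)|$, and it gives a slightly better constant. The index shift you glossed over (you wrote $l+K_0$ where the product bound gives $l+1+K_0$) costs only a factor $(l+1+K_0)/(l+K_0)\le 1+1/K_0\le 2$. This uses $\betatilde\ge 1$ to drop $\bigl((l+1+K_0)/(k+K_0)\bigr)^{\betatilde-1}\le 1$. You therefore get at most $4$ in place of $6$.

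In part (b) the difference is cosmetic. The paper uses weighted AM--GM to get contraction $1-\tfrac{1-\lambda}{2}\betatilde_k$, then unrolls with Lemmas~\ref{lemma:aux1} and~\ref{lemma:aux2} (with $\efrak=1+\delta$). Your $(1+\epsilon)$-Young split keeps the full rate $1-(1-\lambda)\betatilde_k$, and you run the induction directly, which is exactly how Lemma~\ref{lemma:aux2} is proved anyway.

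Two small points on your induction:
\begin{itemize}
\item In the displayed inequality, the first term and the right-hand side should carry the factor $B$ (equivalently, read $c'$ as $c'/B$).
\item Your condition $c'\le B(1-\lambda)/2$ only requires $B\ge 24\zeta\sigma^2\beta/((1-\lambda)^2(1-\delta))$. So the constant $72$ is admissible, not forced.
\end{itemize}
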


\section{Applications}\label{sec:applications}
In this section, we present two algorithms that fall within the SA framework and for which heavy-tailed and long-range dependent noise arise naturally. We also provide numerical simulations that corroborate our theoretical guarantees. In these simulations, we plot the $\ell_2$ error, $\|x_k-\xstar\|$, against the iteration index $k$. Except for single-run plots, results are averaged over $1000$ independent runs. To illustrate the variability across runs, we also report the $10\%$--$90\%$ quantile band.

\subsection{SGD for Strongly Convex Optimization}
Stochastic Gradient Descent (SGD) is a fundamental algorithm in stochastic approximation, with widespread applications in machine learning, statistical estimation, signal processing, and large-scale optimization~\cite{duchi-sgd,lms,stat}. It generates a sequence of iterates according to
\[
x_{k+1} = x_k - \beta_k \bigl( \nabla g(x_k) + \eta_k \bigr),
\]
where $\beta_k$ is the step size, $g : \mathbb{R}^d \to \mathbb{R}$ is the objective function to be minimized, and $\eta_k$ denotes the stochastic error (noise sequence) in the gradient evaluation. Equivalently, the update uses noisy gradient samples $\widehat{\nabla g}(x_k) = \nabla g(x_k) + \eta_k$. For strongly convex and smooth objective functions, the operator $\nabla g(\cdot)$ is strongly monotone and Lipschitz. Under suitable assumptions on the stepsize sequence and the noise process, the iterates converge to the minimizer of $g(\cdot)$.

While classical analyses of SGD often assume i.i.d.\ finite-variance noise, stochastic perturbations in practice can be considerably more complex. In modern machine learning, both empirical and theoretical studies suggest that gradient noise may exhibit heavy-tailed behavior, which motivates the use of stable or other heavy-tailed perturbation models~\cite{ht-sgd-1,ht-sgd-2}. Moreover, when updates are generated from temporally correlated data streams, delayed feedback, momentum-like effects, or colored environmental disturbances, the noise may also exhibit temporal dependence, motivating long-range dependent (LRD) noise models~\cite{lrd-sgd-2}.

As the learning scheme fits our general SA framework, our finite-time bounds (Theorems~\ref{thm:heavy} \&~\ref{thm:LRD}) apply under the corresponding noise models.\\

\begin{figure*}[t]
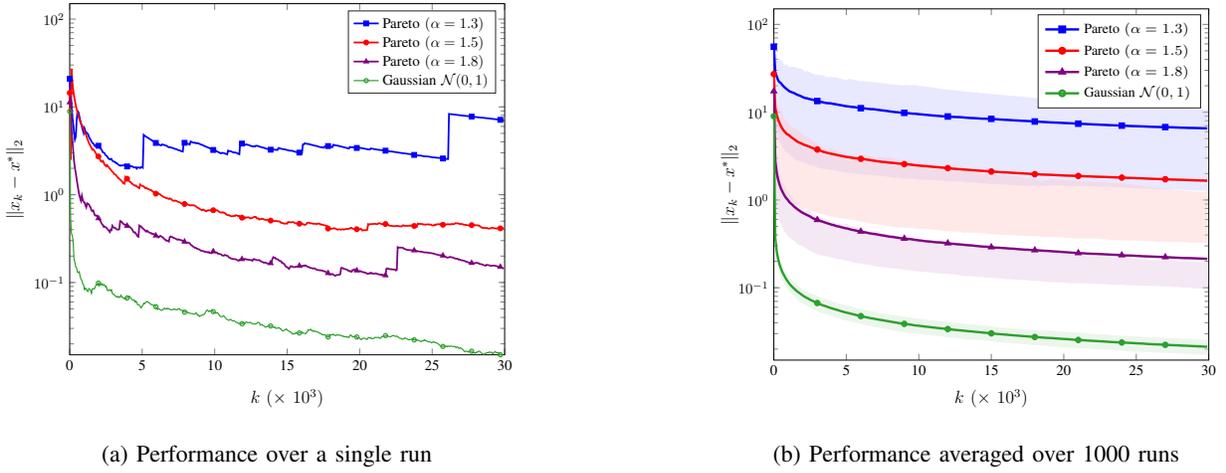

\centering

\begin{subfigure}[t]{0.48\textwidth}
  \centering
  \scalebox{0.75}{\input{figures/ht_sgd_single}}  \vspace{2mm}
  \caption{Performance over a single run}
  \label{fig:single-ht-sgd}
\end{subfigure}\hfill
\hspace*{-3mm}
\begin{subfigure}[t]{0.48\textwidth}
  \centering
  \scalebox{0.75}{\input{figures/ht_sgd_multiple}}
    \vspace{2mm}
  \caption{Performance averaged over 1000 runs}
  \label{fig:multiple-ht-sgd}
\end{subfigure}
\caption{Performance of SGD under heavy-tailed noise ($\alpha-$Pareto distribution)}
\label{fig:sgd-heavy}
\end{figure*}

\begin{figure*}[t]
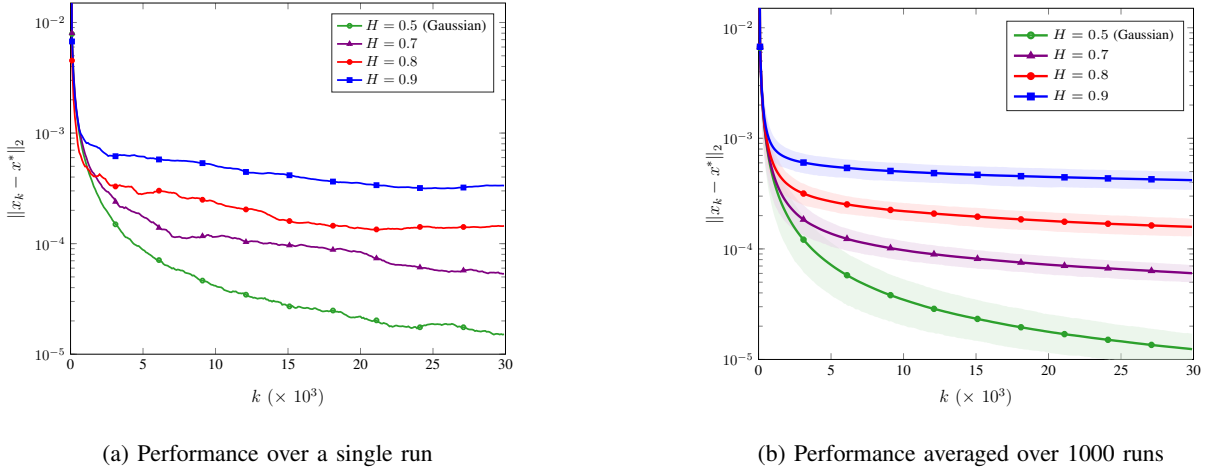

\centering
\begin{subfigure}[t]{0.49\textwidth}
  \centering
  \scalebox{0.75}{\input{figures/lrd_sgd_single}}
  \vspace{2mm}
  \caption{Performance over a single run}
  \label{fig:single-lrd-sgd}
\end{subfigure}\hfill
\hspace*{2mm}
\begin{subfigure}[t]{0.48\textwidth}
  \centering
  \scalebox{0.75}{\input{figures/lrd_sgd_multiple}}
    \vspace{2mm}
  \caption{Performance averaged over 1000 runs}
  \label{fig:multiple-lrd-sgd}
\end{subfigure}
    \vspace{3mm}
\caption{Performance of SGD under LRD noise (fractional Gaussian noise (fGn))}
\label{fig:sgd-LRD}
\end{figure*}

\noindent\textbf{Numerical Simulations}.
We consider a strongly convex function $g(\cdot)$ for our experiments as described below,
\begin{align}
    g(x)=\frac{1}{2}\|Ax-b\|^2 + \sum_{i=1}^{d}\phi_\delta(x_i),
\end{align}
where $\phi_\delta(\cdot)$ is the Huber loss function with threshold $\delta =1$, commonly used in robust regression~\cite{huber1992robust}. The matrix $A \in \mathbb{R}^{m \times d}$ and the vector $b \in \mathbb{R}^{d}$ (with $m =60$ and $d =30$) are sampled as a random Gaussian matrix and vector, respectively. Although it is not constructed explicitly to enforce strong convexity, since $m>d$, such a matrix is almost surely full column rank, so $A^\top A$ is positive definite and the objective function $g$ is strongly convex. The stepsize sequence is chosen as $\beta_k=1/(k+1)$.

For the heavy-tailed experiments, we consider centered Pareto noise with shape parameter $\alpha \in (1,2)$ and scale parameter $1$. In Figures~\ref{fig:single-ht-sgd} and~\ref{fig:multiple-ht-sgd}, we plot the $\ell_2$ error for a single run and the error averaged over $1000$ independent runs, respectively. The results show that smaller values of $\alpha$, corresponding to heavier-tailed noise, lead to slower convergence toward the minimizer. As discussed earlier, heavy-tailed noise is characterized by rare but large-magnitude fluctuations. In the single-run plot, these appear as ``spikes'', while in the averaged results, the mean tends to lie closer to the upper quantiles of the empirical distribution (e.g., the $90\%$ quantile) rather than near the median.

For the LRD experiments, we consider fractional Gaussian noise (fGn) with zero mean, unit variance, and Hurst parameter $H \in (1/2,1)$. The noise is temporally correlated, with $H=0.5$ corresponding to standard Gaussian white noise, and $H>0.5$ corresponding to persistent long-range dependent noise. In the simulations, the generated fGn is scaled by a factor of $20$. In Figures~\ref{fig:single-lrd-sgd} and~\ref{fig:multiple-lrd-sgd}, we plot the $\ell_2$ error for a single run and the error averaged over $1000$ independent runs, respectively. The results show that larger values of $H$, corresponding to stronger temporal dependence in the noise, lead to slower convergence toward the minimizer.


\subsection{Gradient Play in Strongly Monotone Games}
Consider a continuous-action game with $N$ players. Each player $n \in [N]$ takes action $x_k^{(n)}\in\RR^D$ at time $k$. We use $\boldx_k=(x_k^{(1)}, \ldots, x_k^{(N)})$ to denote the $ND$-dimensional concatenation of all players' actions at time $k$. Each player $n$ has utility $u_n(\boldx_k)$ at time $k$ which is a function of all players' actions and they wish to converge to the Nash equilibrium (NE). A NE is an action profile at which no player benefits, i.e., improve their utility from deviating unilaterally. We consider the class of strongly monotone games where there exists a unique pure Nash equilibrium, and the players can converge to this unique NE by performing gradient ascent on their utilities. To formally define such games, we first define the gradient operator as follows $$H(\boldx)\coloneqq \left(\nabla_{x^{(1)}}u_1(\boldx),\ldots,\nabla_{x^{(N)}u_N(\boldx)}\right).$$
A game is strongly monotone if $-H(\cdot)$ is a strongly monotone operator. In this case, the solution to $H(\boldx)=0$ is precisely the unique pure NE of the game. Strongly monotone games are well-studied and include, for example, a large class of resource allocation games and strongly concave potential games~\cite{VI}. 

We study learning of the NE in a stochastic distributed setting. At each iteration $k$, the player $n$ observes a noisy gradient of their utility, i.e., $\nabla_{x^{(n)}}u_n(\boldx_k)+\eta^{(n)}_k$, and updates its action  via gradient ascent using this noisy sample
\begin{align}\label{iter-game}
    x_{k+1}^{(n)}=x_k^{(n)}+\beta_k(\nabla_{x^{(n)}}u_n(\boldx_k)+\eta^{(n)}_k).
\end{align}
Aggregating the above iteration for all players, yields the joint iteration as follows,
$$\boldx_{k+1}=\boldx_k+\beta_k(H(\boldx_k)+\eta_k),$$
where $\eta_k=(\eta_k^{(1)},\ldots, \eta_k^{(N)})$ is the stacked noise vector. This scheme is of the form of iteration~\eqref{eqn:iter} with $-H(\cdot)$ being a strongly monotone operator. 

\begin{figure*}[t]
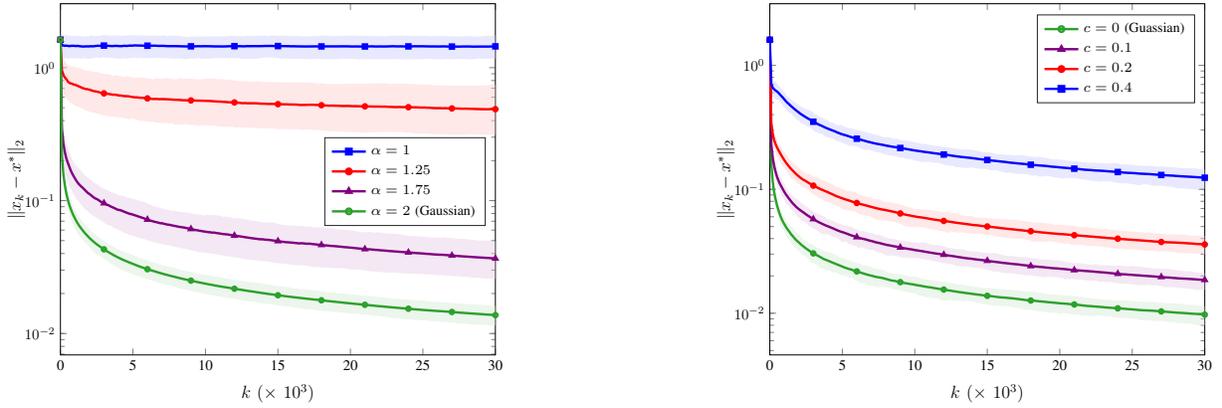

\centering
\begin{subfigure}[t]{0.48\textwidth}
  \centering
  \scalebox{0.75}{\input{figures/games_alpha_stable}}  \vspace{2mm}
  \caption{Heavy-tailed noise ($\alpha-$stable distribution)}
  \label{fig:games-ht}
\end{subfigure}\hfill
\hspace*{-3mm}
\begin{subfigure}[t]{0.48\textwidth}
  \centering
  \scalebox{0.75}{\input{figures/games_farima}}
    \vspace{2mm}
  \caption{LRD noise $(\mathrm{FARIMA}(0,c,0))$}
  \label{fig:games-lrd}
\end{subfigure}
\caption{Performance of gradient play under heavy-tailed and LRD noise models.}
\label{fig:games}
\end{figure*}

Learning of NE in games has been widely studied in the noiseless setting~\cite{VI,gamelearningtheory}. More recently, attention has shifted to stochastic settings, primarily under i.i.d.\ or bounded martingale difference noise assumptions~\cite{noisygamelearning1,noisygamelearning2}. However, such models often fail to capture realistic feedback in large multi-agent systems. Players’ utilities are often driven by shared stochastic processes that are temporally correlated rather than independent. For instance, in electricity grids and other power-system applications, aggregate demand, renewable generation, and market prices exhibit persistence and pronounced spikes~\cite{energy-example1, energy-example2}. In wireless power control, interference and traffic loads display self-similarity, long-range dependence, and bursty behavior~\cite{wireless-example1, wireless-example2}. As a result, the gradient noise arising in these settings is typically both temporally correlated and heavy‑tailed. 

As the learning scheme fits our general SA framework, our finite-time bounds (Theorems~\ref{thm:heavy} \&~\ref{thm:LRD}) apply under the corresponding noise models.\\

\noindent \textbf{Numerical Simulations.} 
We consider power control in wireless networks with $N=12$ links and $D=4$ parallel channels. Each user $n \in [N]$ chooses a power allocation vector $x^{(n)} \in \mathbb{R}^D$, where $x^{(n,d)}$ denotes the transmission power allocated to channel $d \in [D]$. The feasible strategy set of each user is
\begin{align*}
    \mathcal{X}_n=\left\{x^{(n)} \middle | \; x^{(n,d)} \geq 0\ \; \forall d\in[D], \;\; \sum_{d=1}^D x^{(n,d)} \le 1\right\},
\end{align*}
so that power allocations are non-negative and satisfy a per-user sum-power (hard) constraint. The wireless environment is described by channel gain coefficients $g_{m,n}^{(d)}$, where $g_{m,n}^{(d)}$ represents the gain from transmitter of link $m$ to receiver of link $n$ on channel $d$. The interference link  $n$ experiences on channel $d$ is given by $$I_{n}^{(d)}(\mathbf{x})=\sum_{m\neq n}g_{m,n}^{(d)}x^{(m,d)}.$$ The utility received by player $n$ is its achievable throughput or $\log_2(1+\text{SINR})$:
$$u_n(\mathbf{x})=\sum_{d=1}^D \log_2\left(1+\frac{g_{n,n}^{(d)}}{N_0+I_n^{(d)}(\mathbf{x})}\right),$$
where $N_0$ is the variance of the Gaussian noise in the channel. 

To ensure that the game is strongly monotone on the feasible set, we operate in the low-interference regime: the direct gains are sampled independently and uniformly from the interval $[0.8,1.2]$, whereas the interference gains are sampled independently and uniformly from the interval $[0.01,0.05]$. This choice ensures that direct transmission effects dominate cross-user interference. Further, we choose $N_0=1$ for our simulations. We compute the NE $\mathbf{x}^*$ numerically by performing a projected fixed-step iteration. 

Due to the constraints on the action space, the players use a projected gradient ascent iteration, projecting their actions to the set $\mathcal{X}_n$ after each iteration. The iterates are initialized randomly in the feasible set, and the stepsize sequence is chosen as $\beta_k=1/(k+1)$.

For the heavy-tailed experiments, we consider i.i.d.\ coordinate-wise symmetric $\alpha$-stable noise with stability index $\alpha$, skewness parameter $\beta=0$, scale $\sigma=0.2$, and location parameter $\mu=0$. The stability index $\alpha$ determines the tail behavior of the noise: smaller values of $\alpha$ correspond to heavier tails and more frequent large jumps, while $\alpha=2$ reduces to the Gaussian distribution $\mathcal{N}(0,2\sigma^2)$. Compared with Gaussian noise, $\alpha$-stable noise exhibits occasional large outliers, which are more realistic in interference-prone wireless environments. The noise has finite $p^{\text{th}}$ moments only for $p \le \alpha$. In Figure~\ref{fig:games-ht}, we compare the convergence of gradient play for different values of $\alpha$. As expected, smaller values of $\alpha$ lead to slower convergence. For $\alpha=1$, a regime that is not covered by our theoretical analysis, the iterates do not appear to converge.

For the LRD experiments, we consider coordinate-wise independent Gaussian FARIMA$(0,c,0)$ noise, where the memory parameter satisfies $c \in [0,1/2)$. In our simulations, we consider several values of $c$ and generate the noise using the truncated moving-average representation
\begin{align}
\eta_t = \sigma \sum_{j=0}^{L} \psi_j \epsilon_{t-j},
\qquad \epsilon_t \sim \mathcal{N}(0,1),
\end{align}
where $\sigma=0.2$ is the noise scale, $L=500$ is the truncation level, and $\{\psi_j\}$ are the moving-average coefficients. The resulting perturbations are zero-mean Gaussian with temporal dependence controlled by $c$: when $c=0$, the noise reduces to standard white Gaussian noise, while larger values of $c$ correspond to stronger temporal correlations. In Figure~\ref{fig:games-lrd}, we compare the convergence behavior of gradient play for different values of $c$.

\section{Conclusion \& Future Work}\label{sec:conc}
In this paper, we establish finite-time bounds for stochastic approximation under heavy-tailed and long-range dependent noise when the underlying operator is strongly monotone. Our analysis is based on noise-averaging framework with auxiliary iterates, which enable sharp moment bounds under these non-classical noise models. We further apply this framework to SGD and gradient play, demonstrating the impact of heavy tails and temporal dependence on convergence rates. Promising directions for future work include extending these results to non-expansive fixed-point iterations, beyond the contractive setting considered here as well as studying modified algorithms that incorporate techniques such as clipping and normalization, which are commonly used in SGD.

\appendices
\section{Proofs from Section~\ref{sec:outline}}\label{app:lemma_proof}
\subsection{Proof for Lemma~\ref{lemma:contrac}}
Recall that $G(x)=x-\zeta F(x)$, where $\zeta=\mu/L^2$. Then, 
\begin{align*}
    &\|G(x_1)-G(x_2)\|^2\\
    &=\|x_1-x_2-\zeta(F(x_1)-F(x_2))\|^2\\
    &= \|x_1-x_2\|^2+\zeta^2\|F(x_2)-F(x_1)\|^2\\
    &\;\;-2\zeta\langle x_1-x_2, F(x_1)-F(x_2) \rangle\\
    &\stackrel{(a)}{\leq} \|x_1-x_2\|^2+\zeta^2L^2\|x_1-x_2\|^2-2\mu\zeta\|x_1-x_2\|^2\\
    &=(1+\zeta^2L^2-2\mu\zeta)\|x_1-x_2\|^2.
\end{align*}
Here, inequality (a) follows from the $\mu$-strongly monotone and $L$-Lipschitz nature of operator $F(\cdot)$. Now, $(1+\zeta^2L^2-2\mu\zeta)=1-\mu^2/L^2$. Hence,
$$\|G(x_1)-G(x_2)\|\leq \left(\sqrt{1-\frac{\mu^2}{L^2}}\right)\|x_1-x_2\|.$$

Note that strong monotone nature and Lipschitzness of $F(\cdot)$ imply that $\mu \leq L$. This implies that the map $G(\cdot)$ is $\lambda$-contractive where $\lambda=\sqrt{1-\mu^2/L^2}$. Note that $\xstar$ is the unique point such that $F(\xstar)=0$. This implies that $G(\xstar)=\xstar$, and hence $\xstar$ is the unique fixed point of the mapping $G(\cdot)$. This completes the proof for part (a) of Lemma~\ref{lemma:contrac}. For part (b), note that
\begin{align*}
    x_{k+1}&=x_k-\beta_k(F(x_k)+\eta_k)\\
    &=x_k-\betatilde_k(\zeta F(x_k)+\zeta\eta_k)\\
    &=x_k+\betatilde_k(x_k-\zeta F(x_k)-x_k-\zeta\eta_k)\\
    &=x_k+\betatilde(G(x_k)-x_k+\etatilde_k).
\end{align*}
Here $\betatilde_k=\beta_k/\zeta$ and $\etatilde_k=-\zeta\eta_k$. This completes the proof for Lemma~\ref{lemma:contrac}.

\subsection{Proof for Lemma~\ref{lemma:noise_avg}}
Note that $x_k-\xstar=z_k-\xstar+x_k-z_k=z_k-\xstar+U_k$. Using the triangle inequality, we get
$$\|x_k-\xstar\|\leq \|z_k-\xstar\|+\|U_k\|.$$
For $q \geq 1$, the function $x\mapsto x^q$ is convex and hence,
$$\left(\frac{a+b}{2}\right)^q\leq \frac{a^q+b^q}{2}.$$
This implies that 
$$(a+b)^q\leq 2^q\left(\frac{a^q+b^q}{2}\right)\leq 2^{q-1}(a^q+b^q).$$
For $q \in [1,2]$, we get $(a+b)^q\leq 2(a^q+b^q)$. Substituting $a=\|z_k-\xstar\|$ and $b=\|U_k\|$ completes the proof for part (a). For part (b), recall that
\begin{align*}
    x_{k+1}&=x_k+\betatilde(G(x_k)-x_k+\etatilde_k).
\end{align*}
By definition, $x_k=z_k+U_k$. This implies
\begin{align*}
    &z_{k+1}+U_{k+1}=z_k+U_k+\betatilde_k(G(x_k)-z_k-U_k+\etatilde_k)\\
    &\implies z_{k+1}=z_k+\betatilde_k(G(x_k)-z_k)\\
    &\implies z_{k+1}=z_k+\betatilde_k(G(z_k)-z_k+\Delta_k).
\end{align*}
Here, $\Delta_k=G(x_k)-G(z_k)$. Using contractive nature of the map $G(\cdot)$, $\|\Delta_k\|\leq \|x_k-z_k\|=\|U_k\|$. This completes the proof for Lemma~\ref{lemma:noise_avg}.

\subsection{Proof for Lemma~\ref{lemma:heavy}}
Recall that $$U_k=\sum_{i=0}^{k-1}\betatilde_i\prod_{j=i+1}^{k-1}(1-\betatilde_j)\etatilde_i.$$
We use the following von Bahr-Essen-type inequality~\cite[Theorem $3.1$]{Pinelis} which shows that for independent and zero-mean random vectors $\{Y_i\}_{i\geq 0}$ with finite $p^\text{th}$ moment, we have the following:
$$\EE\left[\left\|\sum_{i=0}^{k-1}Y_i\right\|^p\right]\leq 2\sum_{i=0}^{k-1}\EE\left[\|Y_i\|^p\right].$$
Defining $Y_i=\betatilde_i\prod_{j=i+1}^{k-1}(1-\betatilde_j)\etatilde_i$, we get
\begin{align*}
    \EE\left[\|U_k\|^p\right]&\leq 2\sum_{i=0}^{k-1}\EE\left[\left\|\betatilde_i\prod_{j=i+1}^{k-1}(1-\betatilde_j)\etatilde_i\right\|^p\right]\\
    &\leq 2\sum_{i=0}^{k-1}\betatilde_i^p\prod_{j=i+1}^{k-1}(1-\betatilde_j)^p\EE\left[\|\eta_i\|^p\right]\\
    &\leq 2\zeta^p\sigma^p\sum_{i=0}^{k-1}\betatilde_i^p\prod_{j=i+1}^{k-1}(1-\betatilde_j).
\end{align*}
Here the second inequality follows from $\betatilde_i\leq 1$, which holds because $K_0\geq C_5$. The third inequality stems from Assumption~\ref{assu:heavy} and the fact that $\etatilde_k=-\zeta\eta_k$. Next, we apply Lemma~\ref{lemma:aux2} with $\afrak=1, \epsilon=\betatilde^p, \phi=\betatilde$, and $\efrak=p$. For $\betatilde\geq 2(p-1)$, which follows from $\beta\geq C_4$, we have
$$\sum_{i=0}^{k-1}\betatilde_{i}^p\prod_{j=i+1}^{k-1}(1-\betatilde_j)\leq 2\betatilde_k^{p-1}.$$
This implies that $$\EE[\|U_k\|^p]\leq 4\zeta^p\sigma^p\betatilde_k^{p-1}= 4\zeta\sigma^p\left(\frac{\beta}{k+K_0}\right)^{p-1},$$ which completes part (a) of Lemma~\ref{lemma:heavy}.

For the second part of the lemma, we first recall the modified iteration \eqref{iter-alter}. 
$$z_{k+1}=z_k+\betatilde_k(G(z_k)-z_k+\Delta_k).$$
Using the property that $\xstar$ is a fixed point for $G(\cdot)$, we get
\begin{align*}
    z_{k+1}-\xstar&=z_k-\xstar+\betatilde_k(G(z_k)-G(\xstar)-z_k+\xstar+\Delta_k)\\
    &=(1-\betatilde_k)(z_k-\xstar)+\betatilde_k(G(z_k)-G(\xstar)+\Delta_k).
\end{align*}
Now, we use the property that the function $G(\cdot)$ is $\lambda$-contractive to get the following.
\begin{align}\label{recursion-step-1}
    \|z_{k+1}-\xstar\|\leq (1-\lambda'\betatilde_k)\|z_k-\xstar\|+\betatilde_k\|\Delta_k\|,
\end{align}
where $\lambda'=1-\lambda$. 

Now, for $b,c>0$,
$$(b+c)^p-b^p = p\int_{b}^{b+c}t^{p-1}~dt=p\int_{0}^c(b+t)^{p-1}~dt.$$
Note that $0<p-1<1$ which implies that $x^{p-1}$ is concave and consequently subadditive. This implies that $(b+t)^{p-1}\leq b^{p-1}+t^{p-1}$ for $b,t\geq 0$. So,
$$(b+c)^p-b^p\leq p\int_{0}^c\left(b^{p-1}+t^{p-1}\right)~dt=pb^{p-1}c+c^p.$$
Therefore, 
$$(b+c)^p\leq b^p+ pb^{p-1}c+c^p\leq b^p+2b^{p-1}c+c^p.$$
Here we use the fact that $p<2$. Substituting $b=(1-\lambda'\betatilde_k)\|z_k-\xstar\|$ and $c=\betatilde_k\|\Delta_k\|$, we get
\begin{align}\label{heavy-split}
    \|z_{k+1}-\xstar\|^p&\leq (1-\lambda'\betatilde_k)^p\|z_k-\xstar\|^p+\betatilde_k^p\|\Delta_k\|^p\nonumber\\
    &\;\;+2(1-\lambda'\betatilde_k)^{p-1}\|z_k-\xstar\|^{p-1}\betatilde_k\|\Delta_k\|\nonumber\\
    &\leq (1-\lambda'\betatilde_k)\|z_k-\xstar\|^p+\betatilde_k^p\|\Delta_k\|^p\nonumber\\
    &\;\;+2\betatilde_k\|z_k-\xstar\|^{p-1}\|\Delta_k\|.
\end{align}
Here the second inequality follows from the fact that $1-\lambda'\betatilde_k\leq 1$. Now, we handle the last term using the Young's inequality:
$$ab\leq \frac{a^q}{q}+\frac{b^r}{r},$$
where $a,b\geq 0$ and $q,r>1$ such that $1/q+1/r=1$.
Setting $q=p/(p-1), r=p, a=(\lambda'/4)^{\frac{p-1}{p}}\|z_{k}-\xstar\|^{p-1}$ and $b=(\lambda'/4)^{-\frac{p-1}{p}}\|\Delta_k\|$, we get
\begin{align*}
    &\|z_k-\xstar\|^{p-1}\|\Delta_k\|\\
    &\leq \frac{p-1}{p}\frac{\lambda'}{4}\|z_k-\xstar\|^p+\frac{1}{p}\left(\frac{\lambda'}{4}\right)^{-(p-1)}\|\Delta_k\|^p\\
    &\leq \frac{\lambda'}{4}\|z_k-\xstar\|^p+\frac{4}{\lambda'}\|\Delta_k\|^p
\end{align*}
The last inequality here follows from the fact $1<p<2$ and $\lambda'<1$. Hence, this gives us the following intermediate bound on the third term from \eqref{heavy-split}.
$$2\betatilde_k\|z_k-\xstar\|^{p-1}\|\Delta_k\|\leq \frac{\lambda'\betatilde_k}{2}\|z_k-\xstar\|^p+\frac{8\betatilde_k}{\lambda'}\|\Delta_k\|^p.$$
Returning to \eqref{heavy-split}, we get
\begin{align*}
    \|z_{k+1}-\xstar\|^p&\leq (1-\lambda'\betatilde_k)\|z_k-\xstar\|^p+\betatilde_k^p\|\Delta_k\|^p\\
    &\;\; \frac{\lambda'\betatilde_k}{2}\|z_k-\xstar\|^p+\frac{8\betatilde_k}{\lambda'}\|\Delta_k\|^p\\
    &\leq \left(1-\frac{\lambda'}{2}\betatilde_k\right)\|z_k-\xstar\|^p+\frac{9\betatilde_k}{\lambda'}\|\Delta_k\|^p.
\end{align*}
In the second inequality here, we use the fact that $p>1$ and that $\betatilde_k\leq 1$ for all $k$. Using the fact that $\|\Delta_k\|\leq \|U_k\|$ (Lemma~\ref{lemma:noise_avg}) and taking expectation, we get
\begin{align*}
    &\EE\left[\|z_{k+1}-\xstar\|^p\right]\\
    &\leq \left(1-\frac{\lambda'}{2}\betatilde_k\right)\EE\left[\|z_k-\xstar\|^p\right]+\frac{9\betatilde_k}{\lambda'}\EE\left[\|U_k\|^p\right]\\
    &\leq \left(1-\frac{\lambda'}{2}\betatilde_k\right)\EE\left[\|z_k-\xstar\|^p\right]+\frac{36\zeta^p\sigma^p}{\lambda'}\betatilde_k^{p}.
\end{align*}
Here the second inequality follows from the bound on $\EE[\|U_k\|^p]$. Iterating from $i=0$ to $k-1$ gives us.
\begin{align*}
    \EE\left[\|z_k-\xstar\|^p\right]&\leq \|x_0-\xstar\|^p\prod_{i=0}^{k-1}\left(1-\frac{\lambda'}{2}\betatilde_i\right)\\
    &\;\;+\frac{36\zeta^p\sigma^p}{\lambda'}\sum_{i=0}^{k-1}\betatilde_i^{p}\prod_{j=i+1}^{k-1}\left(1-\frac{\lambda'}{2}\betatilde_j\right)
\end{align*}
Here we use the fact that $z_0=x_0$. Using Lemma~\ref{lemma:aux1}, for $\betatilde>2/\lambda'$, which follows from $\beta\geq C_4$, we have
$$\prod_{i=0}^{k-1}\left(1-\frac{\lambda'}{2}\betatilde_i\right)\leq \frac{K_0}{k+K_0}.$$
Using Lemma~\ref{lemma:aux2} with $\afrak=\lambda'/2, \epsilon=\betatilde^p, \phi=\betatilde$, and $\efrak=p$. For $\betatilde\geq 4(p-1)/\lambda'$, which follows from $\beta\geq C_4$, we have
$$\sum_{i=0}^{k-1}\betatilde_i^{p}\prod_{j=i+1}^{k-1}\left(1-\frac{\lambda'}{2}\betatilde_j\right)\leq \frac{4}{\lambda'}\betatilde_k^{p-1}.$$
Then, 
\begin{align*}
    &\EE\left[\|z_k-\xstar\|^p\right]\\
    &\leq \|x_0-\xstar\|^p\left(\frac{K_0}{k+K_0}\right)+\frac{144\zeta^p\sigma^p}{(1-\lambda)^2}\betatilde_k^{p-1}\\
    &\leq \|x_0-\xstar\|^p\left(\frac{K_0}{k+K_0}\right)+\frac{144\zeta\sigma^p}{(1-\lambda)^2}\left(\frac{\beta}{k+K_0}\right)^{p-1}.
\end{align*}

\subsection{Proof for Lemma~\ref{lemma:LRD}}
Under Assumption~\ref{assu:LRD}, $\{\eta_k\}_{k \geq 0}$ is a zero-mean and weakly stationary process which implies that $\EE[\eta_k]=0$ for all $k$, and $\EE[\langle\eta_i,\eta_j\rangle]=\EE[\langle\eta_0,\eta_{j-i}\rangle]=\gamma(j-i)$ for all $j\geq i$, respectively. In particular, $|\gamma(h)|\leq \sigma^2(1+h)^{-\delta}$ for $\delta\in(0,1)$. Let $$w_{i,k}=\betatilde_i\prod_{j=i+1}^{k-1}(1-\betatilde_j).$$ Then $U_k=\sum_{i=0}^{k-1}w_{i,k}\etatilde_i$ where $\etatilde_k=-\zeta\eta_k$. We define $\gammatilde(h)=\EE[\langle\eta_0,\eta_{h}\rangle]\leq \zeta^2\sigma^2(1+h)^{-\delta}$.

Therefore,
\begin{align*}
    \EE\left[\|U_k\|^2\right]
    &= \EE\left[\left\langle \sum_{i=0}^{k-1} w_{i,k}\etatilde_i,\ \sum_{j=0}^{k-1} w_{j,k}\etatilde_j \right\rangle\right] \notag \\
    &= \sum_{i=0}^{k-1} \sum_{j=0}^{k-1} w_{i,k} w_{j,k} \EE\left[\langle \etatilde_i,\etatilde_j\rangle\right] \notag \\
    &= \sum_{i=0}^{k-1} \sum_{j=0}^{k-1} w_{i,k} w_{j,k} \gammatilde(|i-j|)\\
    &= \sum_{i=0}^{k-1} w_{i,k}^2 \gammatilde(0)
    + 2 \sum_{i=0}^{k-1} \sum_{j=i+1}^{k-1} w_{i,k} w_{j,k} \gammatilde(j-i).
\end{align*}
Reparameterizing the inner summation in terms of the lag $h = j-i$, and interchanging the summations, we get 
\begin{align*}
    \sum_{i=0}^{k-1} \sum_{j=i+1}^{k-1} w_{i,k} w_{j,k} \gammatilde(j-i)&= \sum_{i=0}^{k-1} \sum_{h=1}^{k-1-i} w_{i,k} w_{i+h,k} \gammatilde(h)\\
    &=\sum_{h=1}^{k-1} \gammatilde(h)
    \sum_{i=0}^{k-1-h} w_{i,k} w_{i+h,k}.
\end{align*}
Hence,
\begin{align}\label{LRD-noise-avg}
    &\EE\left[\|U_k\|^2\right]\notag\\
    &= \gammatilde(0) \sum_{i=0}^{k-1} w_{i,k}^2
    + 2 \sum_{h=1}^{k-1} \gammatilde(h)
    \sum_{i=0}^{k-1-h} w_{i,k} w_{i+h,k}\notag\\
    &\leq |\gammatilde(0)| \sum_{i=0}^{k-1} w_{i,k}^2
    + 2 \sum_{h=1}^{k-1} |\gammatilde(h)|
    \left|
        \sum_{i=0}^{k-1-h} w_{i,k} w_{i+h,k}
    \right|.
\end{align}
Next, we bound $|\sum_{i=0}^{k-1-h} w_{i,k} w_{i+h,k}|$ using Cauchy-Schwarz inequality:
\begin{align*}
    \left|\sum_{i=0}^{k-1-h} w_{i,k} w_{i+h,k}\right|&\leq \sqrt{\sum_{i=0}^{k-1-h} w_{i,k}^2}\sqrt{\sum_{i=0}^{k-1-h} w_{i+h,k}^2}\\
    &\leq \sqrt{\sum_{i=0}^{k-1} w_{i,k}^2}\sqrt{\sum_{i=0}^{k-1} w_{i,k}^2}\\
    &\leq \sum_{i=0}^{k-1} w_{i,k}^2.
\end{align*}
Next, we use Lemma~\ref{lemma:aux2} with $\afrak=1, \epsilon=\betatilde^2, \phi=\betatilde$, and $\efrak=2$. For $\betatilde\geq 2$, which holds because $\beta\geq C_7$, we have
\begin{align*}
    \sum_{i=0}^{k-1} w_{i,k}^2&\leq \sum_{i=0}^{k-1} \betatilde_i^2\prod_{j=i+1}^{k-1}(1-\betatilde_j)^2\\
    &\leq \sum_{i=0}^{k-1} \betatilde_i^2\prod_{j=i+1}^{k-1}(1-\betatilde_j)\\
    &\leq 2\betatilde_k.
\end{align*}
For the second inequality here, we use the assumption that $\betatilde_k\leq 1$ for all $k$. Returning to \eqref{LRD-noise-avg}, we have
\begin{align*}
    \EE\left[\|U_k\|^2\right]\leq 2|\gammatilde(0)| \betatilde_k
    + 4 \sum_{h=1}^{k-1} |\gammatilde(h)|\betatilde_k.
\end{align*}
For the first term, note that $|\gammatilde(0)|\leq \zeta^2\sigma^2$. To bound the second term, note that
\begin{align*}
    \sum_{h=1}^{k-1} |\gammatilde(h)|&\leq \zeta^2\sigma^2\sum_{h=1}^{k-1}(1+h)^{-\delta}\\
    &\leq \zeta^2\sigma^2\left(1+\int_{1}^{k} x^{-\delta}dx\right)\leq \frac{\zeta^2\sigma^2k^{1-\delta}}{1-\delta}.
\end{align*}
Then, returning to the bound on $\EE[\|U_k\|^2]$, we get 
$$\EE\left[\|U_k\|^2\right]\leq \left(2+\frac{4k^{1-\delta}}{1-\delta}\right)\zeta^2\sigma^2\betatilde_k\leq \frac{6\zeta^2\sigma^2}{1-\delta}k^{1-\delta}\betatilde_k.$$
This completes the proof for part (a) of Lemma~\ref{lemma:LRD}.

For the bound on $\EE[\|z_k-\xstar\|^2]$, we first repeat the steps till \eqref{recursion-step-1} from the proof of Lemma~\ref{lemma:heavy} to get
$$\|z_{k+1}-\xstar\|\leq (1-\lambda'\betatilde_k)\|z_k-\xstar\|+\betatilde\|\Delta_k\|.$$
Squaring both sides, we get
\begin{align*}
    \|z_{k+1}-\xstar\|^2&\leq (1-\lambda'\betatilde_k)^2\|z_k-\xstar\|^2+\betatilde_k^2\|\Delta_k\|^2\\
    &\;\;+2(1-\lambda'\betatilde_k)\betatilde_k\|z_k-\xstar\|\|\Delta_k\|\\
    &\leq (1-\lambda'\betatilde_k)\|z_k-\xstar\|^2+\betatilde_k^2\|\Delta_k\|^2\\
    &\;\;+2\betatilde_k\|z_k-\xstar\|\|\Delta_k\|.
\end{align*}
Here the second inequality follows from the fact that $(1-\lambda'\betatilde_k)\leq 1$ for all $k\geq 0$. For the last term here we apply the weighted AM-GM inequality ($2ab\leq \eta a^2+(1/\eta)b^2$) with $\eta=\lambda'/2, a=\|z_k-\xstar\|$, and $b=\|\Delta_k\|$ to get 
$$2\betatilde_k\|z_k-\xstar\|\|\Delta_k\|\leq \frac{\lambda'}{2}\betatilde_k\|z_k-\xstar\|^2+\frac{2}{\lambda'}\betatilde_k\|\Delta_k\|^2.$$
This implies
\begin{align*}
    &\|z_{k+1}-\xstar\|^2\\
    &\leq \left(1-\frac{\lambda'}{2}\betatilde_k\right)\|z_k-\xstar\|^2+\left(\betatilde_k^2+\frac{2}{\lambda'}\betatilde_k\right)\|\Delta_k\|^2\\
    &\leq \left(1-\frac{\lambda'}{2}\betatilde_k\right)\|z_k-\xstar\|^2+\frac{3}{\lambda'}\betatilde_k\|U_k\|^2.
\end{align*}
Here the second inequality follows from the fact that $\betatilde_k\leq 1$ and Lemma~\ref{lemma:noise_avg} which states that $\|\Delta_k\|\leq \|U_k\|$. Taking expectation, we get 
\begin{align*}
    &\EE\left[\|z_{k+1}-\xstar\|^2\right]\\
    &\leq \left(1-\frac{\lambda'}{2}\betatilde_k\right)\EE\left[\|z_k-\xstar\|^2\right]+\frac{3}{\lambda'}\betatilde_k\EE\left[\|U_k\|^2\right]\\
    &\leq \left(1-\frac{\lambda'}{2}\betatilde_k\right)\EE\left[\|z_k-\xstar\|^2\right]+\frac{18\zeta^2\sigma^2}{\lambda'(1-\delta)}k^{1-\delta}\betatilde_k^2.
\end{align*}
Iterating from $i=0$ to $k-1$, we get 
\begin{align*}
    \EE\left[\|z_k-\xstar\|^2\right]&\leq \|x_0-\xstar\|^2\prod_{i=0}^{k-1} \left(1-\frac{\lambda'}{2}\betatilde_i\right)\\
    &\;\;+\frac{18\zeta^2\sigma^2}{\lambda'(1-\delta)}\sum_{i=0}^{k-1}i^{1-\delta} \betatilde_i^2\prod_{j=i+1}^{k-1}\left(1-\frac{\lambda'}{2}\betatilde_j\right)
\end{align*}
Using Lemma~\ref{lemma:aux1}, for $\betatilde>2/\lambda'$, which follows from $\beta\geq C_7$, we have
$$\prod_{i=0}^{k-1}\left(1-\frac{\lambda'}{2}\betatilde_i\right)\leq \frac{K_0}{k+K_0}.$$
For the second term, we first note that $$i^{1-\delta} \betatilde_i^2\leq (i+K_0)^{1-\delta}\betatilde_i^2=\frac{\betatilde^2}{(i+K_0)^{1+\delta}}.$$
Applying Lemma~\ref{lemma:aux2} with $\afrak=\lambda'/2, \epsilon=\betatilde^2, \phi=\betatilde$, and $\efrak=1+\delta$. For $\betatilde\geq 4\delta/\lambda'$, which follows from $\beta\geq C_7$, we get 
\begin{align*}
    \sum_{i=0}^{k-1}\frac{\betatilde^2}{(i+K_0)^{1+\delta}}\prod_{j=i+1}^{k-1}\left(1-\frac{\lambda'}{2}\betatilde_j\right)\leq \frac{4}{\lambda'}\frac{\betatilde}{(k+K_0)^\delta}.
\end{align*}
Hence,
\begin{align*}
    &\EE\left[\|z_k-\xstar\|^2\right]\\
    &\leq \|x_0-\xstar\|^2\frac{K_0}{k+K_0}+\frac{72\zeta^2\sigma^2}{(1-\lambda)^2(1-\delta)}\frac{\betatilde}{(k+K_0)^\delta}\\
    &= \|x_0-\xstar\|^2\frac{K_0}{k+K_0}+\frac{72\zeta\sigma^2}{(1-\lambda)^2(1-\delta)}\frac{\beta}{(k+K_0)^\delta},
\end{align*}
where the last equality follows from the definition $\betatilde=\beta/\zeta$. This completes the proof for Lemma~\ref{lemma:LRD}.

\section{Proofs for Theorems~\ref{thm:standard},~\ref{thm:heavy} and~\ref{thm:LRD}}\label{app:thm_proof}
\subsection{Proof for Theorem~\ref{thm:standard}}\label{app:thm_proof_standard}
Subtracting $\xstar$ on both sides from \eqref{eqn:iter} gives us
$$x_{k+1}-\xstar=x_k-\xstar-\beta_k(F(x_k)+\eta_k).$$
This implies
\begin{align}\label{standard-eqn-1}
    \|x_{k+1}-\xstar\|^2&=\|x_k-\xstar-\beta_kF(x_k)\|^2\notag\\
    &\;\;+2\langle x_k-\xstar-\beta_kF(x_k),\beta_k\eta_k\rangle+\beta_k^2\|\eta_k\|^2.
\end{align}
For the first term, note that 
\begin{align*}
    &\|x_k-\xstar-\beta_kF(x_k)\|^2\\
    &=\|x_k-\xstar\|^2+\beta_k^2\|F(x_k)\|^2-2\beta_k\langle x_k-\xstar,F(x_k)\rangle\\
    &=\|x_k-\xstar\|^2+\beta_k^2\|F(x_k)-F(\xstar)\|^2\\
    &\;\;-2\beta_k\langle x_k-\xstar,F(x_k)-F(\xstar)\rangle\\
    &\leq \|x_k-\xstar\|^2+L^2\beta_k^2\|x_k-\xstar\|^2-2\mu\beta_k\|x_k-\xstar\|^2.
\end{align*}
Here the second equality follows from the definition that $F(\xstar)=0$ and the inequality follows from Assumption~\ref{assu:monotone}. Under the assumption that $L^2\beta_k^2\leq \mu\beta_k$ and $\mu\beta_k\leq 1$, we get 
\begin{align*}
    \|x_k-\xstar-\beta_kF(x_k)\|^2\leq (1-\mu\beta_k)\|x_k-\xstar\|^2.
\end{align*}
Returning to \eqref{standard-eqn-1}, and taking condition expectation with respect to $\FF_k$, we get 
\begin{subequations}\label{split-standard}
    \begin{align}
    &\EE\left[\|x_{k+1}-\xstar\|^2\mid\FF_k\right]\nonumber\\
    &\leq (1-\mu\beta_k)\EE\left[\|x_k-\xstar\|^2\mid\FF_k\right]\label{split-standard-1}\\
    &\;\;+2\EE\left[\langle x_k-\xstar-\beta_kF(x_k),\beta_k\eta_k\rangle\mid\FF_k\right]\label{split-standard-2}\\
    &\;\;+\beta_k^2\EE\left[\|\eta_k\|^2\mid\FF_k\right]\label{split-standard-3}.
\end{align}
\end{subequations}
Then under the assumption that $\eta_k$ is a light-tailed martingale difference sequence, the term \eqref{split-standard-2} is zero and the term \eqref{split-standard-3} can be bounded by $\beta_k^2\sigma^2$. Taking expectation, \eqref{split-standard} can then be simplified to obtain the following recursion.
\begin{align*}
    \EE\left[\|x_{k+1}-\xstar\|^2\right]\leq (1-\mu\beta_k)\EE\left[\|x_k-\xstar\|^2\right]+\beta_k^2\sigma^2.
\end{align*}
Iterating from $i=0$ to $k-1$, we get 
\begin{align*}
    \EE\left[\|x_k-\xstar\|^2\right]&\leq \|x_0-\xstar\|^2\prod_{i=0}^{k-1}(1-\mu\beta_i)\\
    &\;\;+\sigma^2\sum_{i=0}^{k-1}\beta_i^2\prod_{j=i+1}^{k-1}(1-\mu\beta_j).
\end{align*}
Using Lemma~\ref{lemma:aux1}, for $\beta>1/\mu$, we have
$$\prod_{i=0}^{k-1}(1-\mu\beta_i)\leq \frac{K_0}{k+K_0}.$$
Using Lemma~\ref{lemma:aux2} with $\afrak=\mu, \epsilon=\betatilde^2, \phi=\beta$, and $\efrak=2$. For $\beta\geq 2/\mu$, we have
$$\sum_{i=0}^{k-1}\beta_i^2\prod_{j=i+1}^{k-1}(1-\mu\beta_j)\leq \frac{2}{\mu}\beta_k.$$
Hence,
\begin{align*}
    \EE\left[\|x_k-\xstar\|^2\right]&\leq \|x_0-\xstar\|^2\frac{K_0}{k+K_0}+\frac{2\sigma^2}{\mu}\beta_k\\
    &=\frac{C_3}{k+K_0},
\end{align*}
where $C_3=K_0\|x_0-\xstar\|^2+2\beta\sigma^2/\mu$. This completes the proof for Theorem~\ref{thm:standard}.

\subsubsection{Values of Constants in Theorem~\ref{thm:standard}}
We assume $\beta\geq C_1$, where  $C_1=2/\mu,$ and $K_0\geq C_2$, where $C_2=\beta L^2/\mu+\beta\mu.$ The constant $C_3$ in the bound is  $C_3=K_0\|x_0-\xstar\|^2+2\beta\sigma^2/\mu$.

\subsection{Proof for Theorem~\ref{thm:heavy}}\label{app:thm_proof_heavy}
Using Lemma~\ref{lemma:noise_avg} with $q=p$, we have the following bound.
$$\EE\left[\|x_k-\xstar\|^p\right]\leq 2\EE\left[\|z_k-\xstar\|^p\right]+2\EE\left[\|U_k\|^p\right].$$
We bound the above terms using Lemma~\ref{lemma:heavy}.
\begin{align*}
    &\EE\left[\|x_k-\xstar\|^p\right]\\
    &\leq \frac{2K_0}{k+K_0}\|x_0-\xstar\|^p+\frac{288\zeta\sigma^p}{(1-\lambda)^2}\left(\frac{\beta}{k+K_0}\right)^{p-1}\\
    &\;\;+8\zeta\sigma^p\left(\frac{\beta}{k+K_0}\right)^{p-1}\\
    &\leq \frac{2K_0}{k+K_0}\|x_0-\xstar\|^p+\frac{296\zeta\sigma^p}{(1-\lambda)^2}\left(\frac{\beta}{k+K_0}\right)^{p-1}\\
    &\leq \frac{C_6}{(k+K_0)^{p-1}},
\end{align*}
where $C_6=2K_0\|x_0-\xstar\|^p+296\zeta\sigma^p\beta^{p-1}/(1-\lambda)^2$. This completes the proof for Theorem~\ref{thm:heavy}.

\subsubsection{Values of Constants in Theorem~\ref{thm:heavy}}
We assume $\beta\geq C_4$, where 
$$C_4=\frac{2+4(p-1)}{1-\sqrt{1-\mu^2/L^2}}\frac{\mu}{L^2},$$
and $K_0\geq C_5$, where
$$C_5=\beta\frac{L^2}{\mu}.$$
The constant $C_6$ in the bound is 
$$C_6=2K_0\|x_0-\xstar\|^p+296\frac{\mu}{L^2}\frac{\sigma^p\beta^{p-1}}{(1-\sqrt{1-\mu^2/L^2})^2}.$$

\subsection{Proof for Theorem~\ref{thm:LRD}}\label{app:thm_proof_LRD}
Using Lemma~\ref{lemma:noise_avg} with $q=2$, we have
$$\EE\left[\|x_k-\xstar\|^2\right]\leq 2\EE\left[\|z_k-\xstar\|^2\right]+2\EE\left[\|U_k\|^2\right].$$
We bound the above terms using Lemma~\ref{lemma:LRD}.
\begin{align*}
    &\EE\left[\|x_k-\xstar\|^2\right]\\
    &\leq \frac{2K_0}{k+K_0}\|x_0-\xstar\|^2+\frac{144\zeta\sigma^2}{(1-\lambda)^2(1-\delta)}\frac{\beta}{(k+K_0)^\delta}\\
    &\;\;+\frac{12\zeta\sigma^2}{1-\delta}\frac{\beta}{(k+K_0)^{\delta}}\\
    &\leq \frac{2K_0}{k+K_0}\|x_0-\xstar\|^2+\frac{156\zeta\sigma^2}{(1-\lambda)^2(1-\delta)}\frac{\beta}{(k+K_0)^\delta}\\
    &\leq \frac{C_9}{(k+K_0)^\delta},
\end{align*}
where $C_9=2K_0\|x_0-\xstar\|^2+156\zeta\sigma^2\beta/((1-\delta)(1-\lambda)^2)$. This completes the proof for Theorem~\ref{thm:LRD}.

\subsubsection{Values of Constants in Theorem~\ref{thm:LRD}}
We assume $\beta\geq C_7$, where 
$$C_7=\frac{2+4\delta}{1-\sqrt{1-\mu^2/L^2}}\frac{\mu}{L^2},$$
and $K_0\geq C_8$, where
$$C_8=\beta\frac{L^2}{\mu}.$$
The constant $C_9$ in the bound is 
$$C_9=2K_0\|x_0-\xstar\|^2+156\frac{\mu}{L^2}\frac{\sigma^2\beta}{(1-\delta)(1-\sqrt{1-\mu^2/L^2})^2}.$$

\section{Auxiliary Lemmas}
We present two lemmas which help us simplify the recursions typically obtained in finite-time analysis of SA, and are useful throughout this work.
\begin{lemma}\label{lemma:aux1}
Suppose $\phi_k=\phi/(k+K_0)$ for $\phi,K>0$. If $\phi>\frac{1}{\afrak}$ and $\afrak\phi_k\leq 1$, then
$$\prod_{i=0}^{k-1}(1-\afrak\phi_i)\leq \frac{K_0}{k+K_0}.$$
\end{lemma}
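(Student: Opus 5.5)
We take logarithms and bound the product by an exponential of a harmonic-type sum. Each factor satisfies $0\le 1-\afrak\phi_i\le 1$ by the hypothesis $\afrak\phi_i\le 1$. Hence the product is nonnegative and we may use $1-t\le e^{-t}$ for all real $t$. This gives
\[
\prod_{i=0}^{k-1}(1-\afrak\phi_i)\le \exp\Big(-\afrak\phi\sum_{i=0}^{k-1}\frac{1}{i+K_0}\Big).
\]
For $k=0$ the product is empty and equals $1=K_0/(0+K_0)$, so we may assume $k\ge1$.

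Next we compare the sum with an integral. Since $x\mapsto 1/x$ is decreasing, each term satisfies $\frac{1}{i+K_0}\ge\int_{i+K_0}^{i+1+K_0}\frac{dx}{x}$. Summing over $i=0,\dots,k-1$ gives
\[
\sum_{i=0}^{k-1}\frac{1}{i+K_0}\ge \log\frac{k+K_0}{K_0}.
\]
Substituting, the product is at most $\big(\frac{K_0}{k+K_0}\big)^{\afrak\phi}$.

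The last step uses the hypothesis $\phi>1/\afrak$, that is, $\afrak\phi>1$. Since $K_0/(k+K_0)\in(0,1]$, raising it to a power larger than $1$ can only decrease it. Therefore
\[
\Big(\frac{K_0}{k+K_0}\Big)^{\afrak\phi}\le \frac{K_0}{k+K_0},
\]
which is the claimed bound.

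There is no real obstacle. The only points needing care are:
\begin{itemize}
\item The factors must be nonnegative, which the assumption $\afrak\phi_i\le1$ guarantees. Without it the product could have negative factors and the exponential comparison would fail.
\item The integral comparison must run in the right direction, a lower bound on the sum.
\end{itemize}
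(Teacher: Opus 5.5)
Your proof is correct and follows the paper's argument: bound the product by $\exp\bigl(-\afrak\phi\sum_{i=0}^{k-1}\frac{1}{i+K_0}\bigr)$, lower-bound the harmonic-type sum by $\log\frac{k+K_0}{K_0}$ via integral comparison, and use $\afrak\phi>1$. The only differences are cosmetic. The paper uses $\afrak\phi>1$ to drop the factor $\afrak\phi$ before the integral comparison, whereas you apply it at the end through $\bigl(\frac{K_0}{k+K_0}\bigr)^{\afrak\phi}\le\frac{K_0}{k+K_0}$. You also make explicit the nonnegativity of the factors, which the paper leaves implicit.
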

\begin{proof}
    Using the fact that $1+x\leq e^x$ for all $x\in\RR$,
    \begin{align*}
        \prod_{i=0}^{k-1}(1-\afrak\phi_i)&=\prod_{i=0}^{k-1}\left(1-\frac{\afrak\phi}{i+K_0}\right)\\
        &\leq \exp\left(-\afrak\phi\sum_{i=0}^{k-1}\frac{1}{i+K_0}\right)\\
        &\leq \exp\left(-\sum_{i=0}^{k-1}\frac{1}{i+K_0}\right).
    \end{align*}
    Here the final inequality follows from our assumption that $\phi\afrak>1$. Now, for any non-increasing function $h(x)$, we have that $\sum_{i=a}^{b}\geq \int_{a}^{b+1} h(x)dx.$ This implies that 
    \begin{align*}
        \sum_{i=0}^{k-1}\frac{1}{i+K_0}\geq \int_{0}^{k} \frac{1}{x+K_0}dx=\log\left(\frac{k+K_0}{K_0}\right).
    \end{align*}
    Finally, this implies that 
    $$\prod_{i=0}^{k-1}(1-\afrak\phi_i)\leq \frac{K_0}{k+K_0}.$$
This completes our proof.
\end{proof}

\begin{lemma}\label{lemma:aux2}
    Let $\phi,K, \epsilon>0$. Suppose $\phi_k=\phi/(k+K)$. Let $\epsilon_k=\epsilon/(k+K)^{\efrak}$, where $\efrak\in (1,2]$. If $\afrak> 0$, $\phi\geq \frac{2(\efrak-1)}{\afrak}$ and $\afrak\phi_k\leq 1$, then 
    $$\sum_{i=0}^{k-1}\epsilon_i\prod_{j=i+1}^{k-1}(1-\phi_j\afrak)\leq \frac{2}{\afrak}\frac{\epsilon_k}{\phi_k}.$$
\end{lemma}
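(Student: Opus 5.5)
The plan is to prove the bound by induction on $k$. The natural tool is the one-step recursion satisfied by the left-hand side. Define
$$S_k:=\sum_{i=0}^{k-1}\epsilon_i\prod_{j=i+1}^{k-1}(1-\phi_j\afrak),\qquad B_k:=\frac{2}{\afrak}\frac{\epsilon_k}{\phi_k}=\frac{2\epsilon}{\afrak\phi}(k+K)^{1-\efrak}.$$
Splitting off the $i=k$ term, the sum satisfies $S_0=0$ and
$$S_{k+1}=(1-\afrak\phi_k)S_k+\epsilon_k .$$
The base case is immediate, since $S_0=0\le B_0$.

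For the inductive step, assume $S_k\le B_k$. By hypothesis $\afrak\phi_k\le 1$, so $1-\afrak\phi_k\ge 0$ and the map $s\mapsto(1-\afrak\phi_k)s+\epsilon_k$ is nondecreasing. Hence
$$S_{k+1}\le (1-\afrak\phi_k)B_k+\epsilon_k .$$
By the definition of $B_k$ we have $\afrak\phi_kB_k=2\epsilon_k$. The right-hand side therefore simplifies to $B_k-2\epsilon_k+\epsilon_k=B_k-\epsilon_k$. It thus suffices to show the one-step decrement bound
$$B_k-B_{k+1}\le\epsilon_k .$$

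This decrement bound is the only step with real content, and it is exactly where the condition $\phi\ge 2(\efrak-1)/\afrak$ enters. Set $f(x)=x^{1-\efrak}$. Since $\efrak>1$, $f$ is decreasing with $|f'(x)|=(\efrak-1)x^{-\efrak}$, and $|f'|$ is itself decreasing. The mean value theorem then gives
$$(k+K)^{1-\efrak}-(k+K+1)^{1-\efrak}\le(\efrak-1)(k+K)^{-\efrak}.$$
Multiplying by $2\epsilon/(\afrak\phi)$ yields
$$B_k-B_{k+1}\le \frac{2(\efrak-1)}{\afrak\phi}\,\epsilon_k\le \epsilon_k,$$
where the last inequality uses $\phi\ge 2(\efrak-1)/\afrak$. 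This closes the induction and proves the lemma.

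The restriction $\efrak\le 2$ is not needed for this argument; it only matches the uses of the lemma in the paper. If the mean value step caused trouble, the fallback would be the elementary inequality $1-(1+1/n)^{1-\efrak}\le(\efrak-1)/n$ for $n=k+K$. This follows from Bernoulli's inequality $(1+t)^{-r}\ge 1-rt$ with $r=\efrak-1$, and gives the same decrement bound.
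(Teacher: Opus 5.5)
Your proposal is correct and follows the paper's proof essentially step for step: the same recursion $s_{k+1}=(1-\afrak\phi_k)s_k+\epsilon_k$, the same induction, and the same reduction to the decrement bound $\frac{2}{\afrak}\bigl(\frac{\epsilon_k}{\phi_k}-\frac{\epsilon_{k+1}}{\phi_{k+1}}\bigr)\le\epsilon_k$. The differences are minor: you bound $(k+K)^{1-\efrak}-(k+K+1)^{1-\efrak}$ with the mean value theorem, where the paper uses $(1+1/x)^x\le e$ together with $e^x\ge 1+x$, and you state explicitly that $\afrak\phi_k\le 1$ is needed to carry the inductive hypothesis through, which the paper uses without comment.
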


\begin{proof}
    Define sequence $s_{0}=0$ and $s_{k+1}=(1-\phi_k\afrak)s_k+\epsilon_k$. Note that $s_k=\sum_{i=0}^{k-1}\epsilon_i\prod_{j=i+1}^{k-1}(1-\phi_j\afrak)$. We will use induction to show our required result. Suppose that $s_k\leq (2/\afrak)(\epsilon_k/\phi_k)$ holds for some $k$. Then,
    \begin{align*}
        \frac{2}{\afrak}\frac{\epsilon_{k+1}}{\phi_{k+1}}-s_{k+1}&=\frac{2}{\afrak}\frac{\epsilon_{k+1}}{\phi_{k+1}}-(1-\afrak\phi_k)s_k-\epsilon_k\\
        &\geq \frac{2}{\afrak}\frac{\epsilon_{k+1}}{\phi_{k+1}}-(1-\afrak\phi_k)\frac{2}{\afrak}\frac{\epsilon_k}{\phi_k}-\epsilon_k\\
        &=\frac{2}{\afrak}\left(\frac{\epsilon_{k+1}}{\phi_{k+1}}-\frac{\epsilon_k}{\phi_k}\right)+\epsilon_k.
    \end{align*}
    Here the inequality follows from our assumption that the required inequality holds at time $k$.
    Now,
    $$\left(\frac{\epsilon_{k+1}}{\phi_{k+1}}-\frac{\epsilon_k}{\phi_k}\right)=\frac{\epsilon}{\phi}\left(\frac{1}{(k+K+1)^{\efrak-1}}-\frac{1}{(k+K)^{\efrak-1}}\right).$$
    For $\efrak-1\in(0,1]$,
    \begin{align*}
        &\frac{1}{(k+K+1)^{\efrak-1}}-\frac{1}{(k+K)^{\efrak-1}}\\
        &=\frac{1}{(k+K)^{\efrak-1}}\left(\left[\left(1+\frac{1}{k+K}\right)^{k+K}\right]^{-\frac{\efrak-1}{k+K}}-1\right)\\
        &\geq \frac{1}{(k+K)^{\efrak-1}}\left(e^{-\frac{\efrak-1}{k+K}}-1\right)\\
        &\geq -\frac{1}{(k+K)^{\efrak-1}}\frac{\efrak-1}{k+K}=-\frac{\efrak-1}{\epsilon}\epsilon_k.
    \end{align*}
    Here, the first inequality follows from the inequality $(1+1/x)^x\leq e$ and $e^x\geq 1+x$ for all $x$. This implies 
    \begin{align*}
        \frac{2}{\afrak}\frac{\epsilon_{k+1}}{\phi_{k+1}}-s_{k+1}&\geq -\frac{2}{\afrak}\frac{\epsilon}{\phi}\frac{\efrak-1}{\epsilon}\epsilon_k+\epsilon_k\\
        &=\epsilon_k\left(1-\frac{2(\efrak-1)}{\afrak\phi}\right).
    \end{align*}
    Since we have the assumption that $\phi\geq \frac{2(\efrak-1)}{\afrak}$, therefore, the following holds $s_{k+1}\leq \frac{2}{\afrak}\frac{\epsilon_{k+1}}{\phi_{k+1}}$. This completes the proof by induction.
\end{proof}

\section*{References}
\bibliographystyle{IEEEtran}
\bibliography{ref}

@book{Foss-Pareto,
  title={An introduction to heavy-tailed and subexponential distributions},
  author={Foss, Sergey and Korshunov, Dmitry and Zachary, Stan and others},
  volume={6},
  year={2011},
  publisher={Springer}
}

@book{heavy-tail-book,
  title={The fundamentals of heavy tails: Properties, emergence, and estimation},
  author={Nair, Jayakrishnan and Wierman, Adam and Zwart, Bert},
  volume={53},
  year={2022},
  publisher={Cambridge University Press}
}

@book{Sutton,
  title={Reinforcement learning: An introduction},
  author={Sutton, Richard S and Barto, Andrew G},
  year={2018},
publisher={The MIT Press}
}

@article{Robbins-Monro,
author = {Herbert Robbins and Sutton Monro},
title = {{A Stochastic Approximation Method}},
volume = {22},
journal = {The Annals of Mathematical Statistics},
number = {3},
publisher = {Institute of Mathematical Statistics},
pages = {400 -- 407},
year = {1951}
}

@article{bubeck2015convex,
  title={Convex optimization: Algorithms and complexity},
  author={Bubeck, S{\'e}bastien},
  journal={Foundations and trends in Machine Learning},
  volume={8},
  number={3-4},
  pages={231--357},
  year={2015},
  publisher={Emerald Publishing limited}
}

@book{Kushner,
  title={Stochastic Approximation and Recursive Algorithms and Applications},
  author={Kushner, H. and Yin, G.G.},
  isbn={9781489926968},
  lccn={96048847},
  series={Stochastic Modelling and Applied Probability},
  year={2013},
  publisher={Springer New York}
}

@article{Borkar-Markov,
author = {Borkar, V. S. and Meyn, S. P.},
title = {The O.D.E. Method for Convergence of Stochastic Approximation and Reinforcement Learning},
journal = {SIAM Journal on Control and Optimization},
volume = {38},
number = {2},
pages = {447-469},
year = {2000}
}

@book{Borkar-book,
  title={Stochastic Approximation: A Dynamical Systems Viewpoint: Second Edition},
  author={Borkar, V.S.},
  isbn={9788195196111},
  series={Texts and Readings in Mathematics},
  year={2022},
  publisher={Hindustan Book Agency}
}

@book{Samo-alpha-stable,
  title={Stable non-Gaussian random processes: stochastic models with infinite variance},
  author={Samorodnitsky, Gennady and Taqqu, Murad S},
  volume={1},
  year={1994},
  publisher={CRC press}
}

@article{Pinelis,
  title={Multidimensional probability inequalities via spherical symmetry},
  author={Pinelis, Iosif},
  journal={arXiv preprint arXiv:2210.04391},
  year={2022}
}

@misc{ht-sgd-2,
      title={Privacy of SGD under Gaussian or Heavy-Tailed Noise: Guarantees without Gradient Clipping}, 
      author={Umut Şimşekli and Mert Gürbüzbalaban and Sinan Yıldırım and Lingjiong Zhu},
      year={2025},
      eprint={2403.02051},
      archivePrefix={arXiv},
      primaryClass={stat.ML},
      url={https://arxiv.org/abs/2403.02051}, 
}

@book{LRD-book, place={Cambridge}, series={Cambridge Series in Statistical and Probabilistic Mathematics}, title={Long-Range Dependence and Self-Similarity}, publisher={Cambridge University Press}, author={Pipiras, Vladas and Taqqu, Murad S.}, year={2017}, collection={Cambridge Series in Statistical and Probabilistic Mathematics}}

@ARTICLE{lrd-ex-1,
  author={Leland, W.E. and Taqqu, M.S. and Willinger, W. and Wilson, D.V.},
  journal={IEEE/ACM Transactions on Networking}, 
  title={On the self-similar nature of Ethernet traffic (extended version)}, 
  year={1994},
  volume={2},
  number={1},
  pages={1-15}}

@article{lrd-ex-2,
author = {Rypdal, K. and Østvand, L. and Rypdal, M.},
title = {Long-range memory in Earth's surface temperature on time scales from months to centuries},
journal = {Journal of Geophysical Research: Atmospheres},
volume = {118},
number = {13},
pages = {7046-7062},
year = {2013}
}

@article{lrd-ex-3,
	author = {Daniel O. Cajueiro and Benjamin M. Tabak},
	doi = {https://doi.org/10.1016/j.chaos.2006.09.090},
	issn = {0960-0779},
	journal = {Chaos, Solitons \& Fractals},
	number = {3},
	pages = {918-927},
	title = {Testing for long-range dependence in world stock markets},
	volume = {37},
	year = {2008}}

@article{finmarkets,
	author = {R. Cont},
	doi = {10.1080/713665670},
	eprint = {https://doi.org/10.1080/713665670},
	journal = {Quantitative Finance},
	number = {2},
	pages = {223--236},
	publisher = {Routledge},
	title = {Empirical properties of asset returns: stylized facts and statistical issues},
	volume = {1},
	year = {2001}}

@article{qq,
	author = {Whitt, Ward},
	date = {2000/11/01},
	doi = {10.1023/A:1019143505968},
	id = {Whitt2000},
	isbn = {1572-9443},
	journal = {Queueing Systems},
	number = {1},
	pages = {71--87},
	title = {The impact of a heavy-tailed service-time distribution upon the M/GI/s waiting-time distribution},
	volume = {36},
	year = {2000}}

@InProceedings{ht-sgd-1,
  title = 	 {High-probability Convergence Bounds for Online Nonlinear Stochastic Gradient Descent under Heavy-tailed Noise},
  author =       {Armacki, Aleksandar and Yu, Shuhua and Sharma, Pranay and Joshi, Gauri and Bajovic, Dragana and Jakovetic, Dusan and Kar, Soummya},
  booktitle = 	 {Proceedings of The 28th International Conference on Artificial Intelligence and Statistics},
  pages = 	 {1774--1782},
  year = 	 {2025},
  editor = 	 {Li, Yingzhen and Mandt, Stephan and Agrawal, Shipra and Khan, Emtiyaz},
  volume = 	 {258},
  series = 	 {Proceedings of Machine Learning Research},
  month = 	 {03--05 May},
  publisher =    {PMLR}
}

@inproceedings{
lrd-sgd-2,
title={Gradient Descent with Linearly Correlated Noise: Theory and Applications to Differential Privacy},
author={Anastasia Koloskova and Ryan McKenna and Zachary Charles and J Keith Rush and Hugh Brendan McMahan},
booktitle={Thirty-seventh Conference on Neural Information Processing Systems},
year={2023}
}

@article{stat,
 ISSN = {08834237},
 author = {Ying Kuen Cheung},
 journal = {Statistical Science},
 number = {2},
 pages = {191--201},
 publisher = {Institute of Mathematical Statistics},
 title = {Stochastic Approximation and Modern Model-Based Designs for Dose-Finding Clinical Trials},
 urldate = {2026-03-14},
 volume = {25},
 year = {2010}
}

@book{VI,
  title={Finite-dimensional variational inequalities and complementarity problems},
  author={Facchinei, Francisco and Pang, Jong-Shi},
  year={2007},
  publisher={Springer Science \& Business Media}
}

@article{noisygamelearning1,
  title={Doubly optimal no-regret online learning in strongly monotone games with bandit feedback},
  author={Ba, Wenjia and Lin, Tianyi and Zhang, Jiawei and Zhou, Zhengyuan},
  journal={Operations Research},
  volume={73},
  number={6},
  pages={3219--3244},
  year={2025},
  publisher={INFORMS}
}

@incollection{huber1992robust,
  title={Robust estimation of a location parameter},
  author={Huber, Peter J},
  booktitle={Breakthroughs in statistics: Methodology and distribution},
  pages={492--518},
  year={1992},
  publisher={Springer}
}

@article{noisygamelearning2,
  title={No-regret learning in games with noisy feedback: Faster rates and adaptivity via learning rate separation},
  author={Hsieh, Yu-Guan and Antonakopoulos, Kimon and Cevher, Volkan and Mertikopoulos, Panayotis},
  journal={Advances in Neural Information Processing Systems},
  volume={35},
  pages={6544--6556},
  year={2022}
}

@book{gamelearningtheory,
  title={The theory of learning in games},
  author={Fudenberg, Drew and Levine, David K},
  volume={2},
  year={1998},
  publisher={MIT press}
}

@article{Zaiwei,
  title={Finite-sample analysis of nonlinear stochastic approximation with applications in reinforcement learning},
  author={Chen, Zaiwei and Zhang, Sheng and Doan, Thinh T and Clarke, John-Paul and Maguluri, Siva Theja},
  journal={Automatica},
  volume={146},
  pages={110623},
  year={2022},
  publisher={Elsevier}
}

@inproceedings{srikant2019finite,
  title={Finite-time error bounds for linear stochastic approximation andtd learning},
  author={Ying, Lei},
  booktitle={Conference on learning theory},
  pages={2803--2830},
  year={2019},
  organization={PMLR}
}

@article{anantharam2012stochastic,
  title={Stochastic approximation with long range dependent and heavy tailed noise},
  author={Anantharam, Venkat and Borkar, Vivek S},
  journal={Queueing Systems},
  volume={71},
  number={1},
  pages={221--242},
  year={2012},
  publisher={Springer}
}

@article{Bravo,
author = {Bravo, Mario and Cominetti, Roberto},
title = {Stochastic Fixed-Point Iterations for Nonexpansive Maps: Convergence and Error Bounds},
journal = {SIAM Journal on Control and Optimization},
volume = {62},
number = {1},
pages = {191-219},
year = {2024},
doi = {10.1137/22M1515550}
}

@article{Chandak-TTS-opti,
  title={$ O (1/k) $ Finite-Time Bound for Non-Linear Two-Time-Scale Stochastic Approximation},
  author={Chandak, Siddharth},
  journal={arXiv preprint arXiv:2504.19375},
  year={2025}
}

@INPROCEEDINGS{lms,
  author={Widrow, B. and Lehr, M. and Beaufays, F. and Wan, E. and Bileillo, M.},
  booktitle={IEEE International Conference on Neural Networks}, 
  title={Learning algorithms for adaptive processing and control}, 
  year={1993},
  volume={},
  number={},
  pages={1-8 vol.1}}

@article{duchi-sgd,
  author  = {John Duchi and Elad Hazan and Yoram Singer},
  title   = {Adaptive Subgradient Methods for Online Learning and Stochastic Optimization},
  journal = {Journal of Machine Learning Research},
  year    = {2011},
  volume  = {12},
  number  = {61},
  pages   = {2121--2159}
}

@article{energy-example1,
  title={A stochastic simulation scheme for the long-term persistence, heavy-tailed and double periodic behavior of observational and reanalysis wind time-series},
  author={Katikas, Loukas and Dimitriadis, Panayiotis and Koutsoyiannis, Demetris and Kontos, Themistoklis and Kyriakidis, Phaedon},
  journal={Applied Energy},
  volume={295},
  pages={116873},
  year={2021},
  publisher={Elsevier}
}

@article{energy-example2,
  title={Heavy tails and electricity prices: Do time series models with non-Gaussian noise forecast better than their Gaussian counterparts?},
  author={Weron, Rafal and Misiorek, Adam},
  year={2007},
  publisher={Hugo Steinhaus Center, Wroclaw University of Technology}
}

@article{wireless-example1,
  title={Long-range dependence ten years of Internet traffic modeling},
  author={Karagiannis, Thomas and Molle, Mart and Faloutsos, Michalis},
  journal={IEEE internet computing},
  volume={8},
  number={5},
  pages={57--64},
  year={2004},
  publisher={IEEE}
}

@article{wireless-example2,
  title={Self-Similar Network Traffic: An Overview},
  author={Park, Kihong and Willinger, Walter},
  journal={Self-Similar Network Traffic and Performance Evaluation},
  pages={1--38},
  year={2000},
  publisher={Wiley Online Library}
}

@InProceedings{vanilla-sgd-1,
  title = 	 {A Tail-Index Analysis of Stochastic Gradient Noise in Deep Neural Networks},
  author =       {Simsekli, Umut and Sagun, Levent and Gurbuzbalaban, Mert},
  booktitle = 	 {Proceedings of the 36th International Conference on Machine Learning},
  pages = 	 {5827--5837},
  year = 	 {2019},
  editor = 	 {Chaudhuri, Kamalika and Salakhutdinov, Ruslan},
  volume = 	 {97},
  series = 	 {Proceedings of Machine Learning Research},
  month = 	 {09--15 Jun},
  publisher =    {PMLR}
}

@inproceedings{vanilla-sgd-2,
  title={Algorithmic stability of heavy-tailed stochastic gradient descent on least squares},
  author={Raj, Anant and Barsbey, Melih and Gurbuzbalaban, Mert and Zhu, Lingjiong and {\c{S}}im, Umut and others},
  booktitle={International Conference on Algorithmic Learning Theory},
  pages={1292--1342},
  year={2023},
  organization={PMLR}
}

@article{vanilla-sgd-3,
  title={Can SGD Handle Heavy-Tailed Noise?},
  author={Fatkhullin, Ilyas and H{\"u}bler, Florian and Lan, Guanghui},
  journal={arXiv preprint arXiv:2508.04860},
  year={2025}
}

@article{vanilla-sgd-4,
  title={Convergence rates of stochastic gradient descent under infinite noise variance},
  author={Wang, Hongjian and Gurbuzbalaban, Mert and Zhu, Lingjiong and Simsekli, Umut and Erdogdu, Murat A},
  journal={Advances in Neural Information Processing Systems},
  volume={34},
  pages={18866--18877},
  year={2021}
}

@article{vanilla-sgd-5,
  author  = {Wanrong Zhu and Zhipeng Lou and Wei Biao Wu},
  title   = {Beyond Sub-Gaussian Noises: Sharp Concentration Analysis for Stochastic Gradient Descent},
  journal = {Journal of Machine Learning Research},
  year    = {2022},
  volume  = {23},
  number  = {46},
  pages   = {1--22}
}

@article{clipping-1,
  title={Stochastic optimization with heavy-tailed noise via accelerated gradient clipping},
  author={Gorbunov, Eduard and Danilova, Marina and Gasnikov, Alexander},
  journal={Advances in Neural Information Processing Systems},
  volume={33},
  pages={15042--15053},
  year={2020}
}

@article{clipping-2,
  title={Eliminating sharp minima from SGD with truncated heavy-tailed noise},
  author={Wang, Xingyu and Oh, Sewoong and Rhee, Chang-Han},
  journal={arXiv preprint arXiv:2102.04297},
  year={2021}
}

@article{normalization,
  author  = {Tao Sun and Xinwang Liu and Kun Yuan},
  title   = {Revisiting Gradient Normalization and Clipping for Nonconvex SGD under Heavy-Tailed Noise: Necessity, Sufficiency, and Acceleration},
  journal = {Journal of Machine Learning Research},
  year    = {2025},
  volume  = {26},
  number  = {237},
  pages   = {1--42}
}

\end{document}